\documentclass[11pt,letterpaper]{article}

\usepackage[T1]{fontenc}
\usepackage[utf8]{inputenc}
\usepackage{fullpage}
\usepackage{graphicx}
\graphicspath{{plots/}}
\DeclareGraphicsExtensions{.pdf,.png,.jpg}

\usepackage{microtype}
\usepackage{graphicx}
\usepackage{booktabs} % for professional tables

\usepackage{algorithm}
\usepackage{algorithmic}
\usepackage{amsmath}
\usepackage{amsfonts}
\usepackage{amsthm}
\usepackage{mathtools}
\usepackage{verbatim}
\usepackage{hyperref}  
\usepackage{natbib}
\usepackage{multirow}
\usepackage{subcaption}

\newtheorem{theorem}{Theorem}

\newtheorem{corollary}{Corollary}

\newtheorem{assumption}{Assumption}

\DeclareMathOperator*{\argmin}{arg\,min}

\newcommand{\sqnorm}[1]{\left\| #1 \right\|^2}
\newcommand{\Exp}[1]{\mathbb{E}\!\left[ #1 \right]}

\usepackage{pifont}

\usepackage{xspace}
\usepackage{scalefnt}

\usepackage{xcolor}
\definecolor{mydarkred2}{rgb}{0.85,0.0,0.0}
\newcommand{\algname}[1]{{\sf\color{mydarkred2}\scalefont{0.957}{#1}}\xspace}

\newcommand{\eqdef}{\coloneqq}

\usepackage{xspace}
\usepackage{scalefnt}

\definecolor{mydarkgreen}{rgb}{0,0.42,0}
\definecolor{mydarkred}{rgb}{0.75,0,0}
\definecolor{mygreen2}{RGB}{0,120,20}

\newcommand{\sqn}[1]{{\left\lVert#1\right\rVert}^2}
\newcommand{\norm}[1]{\left\| #1 \right\|}
 % product

\newcommand{\xstar}{x^{\star}}

% \newcommand{\mB}{\mathbf{B}}
% \newcommand{\mC}{\mathbf{C}}

% \newcommand{\hls}[1]{\textbf{\textcolor{blue}{\small{ #1}}}}
  % IMPORTANT
  % IMPORTANT

%\newcommand{\Fn}{(\mathcal{F}_n)_{n\geq 0}}

\newcommand{\red}{\color{mydarkred}}
\newcommand{\Rd}{\mathbb{R}^d}

\newcommand{\sumin}{\sum_{i=1}^n}

\newcommand{\avein}{\frac{1}{n}\sum_{i=1}^n}

% \newcommand{\cC}{{\cal C}}
% \newcommand{\cO}{{\cal O}}
% \newcommand{\cN}{{\cal N}}
% \newcommand{\cX}{{\cal X}}

%\rho

\begin{document}

 \author{Kai Yi  \qquad Laurent Condat \qquad Peter Richt\'{a}rik\\
 \phantom{xx}
 \\
 King Abdullah University of Science and Technology (KAUST)\\ Thuwal, Kingdom of Saudi Arabia}

 \date{May 18, 2023}

\title{\bf Explicit Personalization and Local Training: Double Communication Acceleration in Federated Learning}

\maketitle

\begin{abstract}  
    Federated Learning is an evolving machine learning paradigm, in which multiple clients perform computations based on their individual private data, interspersed by communication with a remote server. A common strategy to curtail communication costs is \emph{Local Training}, which consists in performing multiple local stochastic gradient descent steps between successive communication rounds. However, the conventional approach to local training overlooks the practical necessity for client-specific \emph{personalization}, a technique to tailor local models to individual needs. We introduce \algname{Scafflix}, a novel algorithm that efficiently integrates explicit personalization with local training. This innovative approach benefits from these two techniques, thereby achieving doubly accelerated communication, 
  as we demonstrate both in theory and practice.
\end{abstract}

\newpage
\tableofcontents
\newpage

\section{Introduction}

Due to privacy concerns and limited computing resources on edge devices, centralized training with all data first gathered in a datacenter is often impossible in many real-world applications of data science and artificial intelligence. As a result, Federated Learning (FL) has gained increasing interest as a framework that enables multiple clients to do local computations, based on their personal data kept private, and to communicate back and forth with a server. FL is classically formulated as an empirical risk minimization problem of the form
\begin{equation}\label{eq:ERM}\tag{ERM}
    \min_{x\in \Rd} \left[ f(x) \eqdef \avein f_i(x) \right],
\end{equation}
where $f_i$ is the local objective on client $i$, $n$ is the total number of clients, $x$ is the global model.

Thus, the usual approach is to solve \eqref{eq:ERM} and then to deploy the obtained globally optimal model $\xstar \eqdef {\arg\min_{x\in \Rd}} \,f(x)$  to all clients. To reduce communication costs between the server and the clients, the practice of updating the local parameters multiple times before aggregation, known as \emph{Local Training} (LT)~\citep{Povey2015, SparkNet2016, FL2017-AISTATS, Li-local-bounded-grad-norms--ICLR2020, LocalDescent2019, localGD, localSGD-AISTATS2020, SCAFFOLD, LSGDunified2020, FEDLIN}, is widely used in FL. 
LT, in its most modern form, is a communication-acceleration mechanism, as we detail in Section~\ref{secLT}.

Meanwhile, there is a growing interest in providing \emph{personalization} to the clients, by providing them more-or-less customized models tailored to their individual needs and heterogeneous data, instead of the one-size-fits-all model $x^\star$. We review existing approaches to personalization in Section~\ref{secper}. If personalization is pushed to the extreme, every client just uses its private data to learn its own locally-optimal model
\begin{equation*}
\xstar_i \eqdef {\arg\min_{x\in \Rd}}\, f_i(x)
\end{equation*}
and no communication at all is needed. Thus, intuitively, more personalization means less communication needed to reach a given accuracy. In other words, personalization is a communication-acceleration mechanism, like LT.

Therefore, we raise the following question: 
\begin{quote}
 \em  
 Is it possible to achieve double communication acceleration in FL by jointly leveraging the acceleration potential of personalization and local training?
\end{quote}

For this purpose, we first have to formulate personalized FL as an optimization problem. 
A compelling interpretation of LT~\citep{hanzely2020federated} is that it amounts to solve an implicit personalization objective of the form:

\begin{equation}\label{eq:L2GD}
    \min_{x_1, \ldots, x_n\in \Rd} 
    \avein f_i(x_i) + \frac{\lambda}{2n}\sumin \sqn{\bar{x} - x_i},
\end{equation}
where $x_i \in \Rd$ denotes the local model at client $i\in [n]\eqdef \{1,\ldots,n\}$, $\bar{x} \eqdef \avein x_i$ is the average of these local models, and 
$\lambda \geq 0$ is the implicit personalization parameter that controls the amount of personalization. When $\lambda$ is small, the 
local models tend to be trained locally.
On the other hand, a larger $\lambda$ puts more penalty on making the local models $x_i$ close to their mean $\bar{x}$, or equivalently in making all models close to each other, by pushing towards averaging over all clients. Thus, LT is not only compatible with personalization, but can be actually used to implement it, though implicitly: there is a unique parameter $\lambda$ in \eqref{eq:L2GD} and it is difficult evaluate the amount of personalization for a given value of $\lambda$.

The more accurate model FLIX for personalized FL was proposed by \citet{FLIX}.
It consists for every client $i$ to first compute locally its personally-optimal model $\xstar_i$, 
and then to solve the problem
\begin{equation}\label{eq:FLIX}\tag{FLIX}
\min_{x\in \Rd} \,\tilde{f}(x) \eqdef \avein f_i\big(\alpha_i x+ (1 - \alpha_i)\xstar_i\big),
\end{equation}
where $\alpha_i \in [0,1]$ is the explicit and individual personalization factor for client $i$. At the end, the personalized model used by client $i$ is 
the explicit mixture 
\begin{equation*}
\tilde{x}_i^\star\eqdef \alpha_i \xstar+ (1 - \alpha_i)\xstar_i,
\end{equation*}
where $\xstar$ is the solution to \eqref{eq:FLIX}.
A smaller value of $\alpha_i$ gives more weight to $\xstar_i$, which means more personalization. On the other hand, if $\alpha_i=1$, 
%$\tilde{f}_i=f_i$ and 
the client $i$ uses the global model $\xstar$ without personalization. Thus, if all $\alpha_i$ are equal to 1, there is no personalization at all and \eqref{eq:FLIX} reverts to \eqref{eq:ERM}. So, \eqref{eq:FLIX} is a more general formulation of FL than \eqref{eq:ERM}. The functions  in \eqref{eq:FLIX} inherit smoothness and strong convexity from
the $f_i$, so 
every algorithm appropriate for \eqref{eq:ERM} can also be applied to solve \eqref{eq:FLIX}. \citet{FLIX} proposed an algorithm also called \algname{FLIX} to solve \eqref{eq:FLIX}, which is simply vanilla distributed gradient descent (\algname{GD}) applied to \eqref{eq:FLIX}.

In this paper, we first redesign and generalize the recently-proposed \algname{Scaffnew} algorithm~\citep{ProxSkip}, which features LT and has an accelerated communication complexity, and propose Individualized-Scaffnew (\algname{i-Scaffnew}), wherein the clients can have different properties. 
We then apply and tune \algname{i-Scaffnew} for the problem \eqref{eq:FLIX} and propose our new algorithm for personalized FL, which we call \algname{Scafflix}. We answer positively to the question above and prove that \algname{Scafflix} enjoys a doubly accelerated communication complexity, by jointly harnessing 
the acceleration potential of LT and personalization. That is, its communication complexity depends on the square root of the condition number of the functions $f_i$ and on the $\alpha_i$. 
In addition to establishing the new state of the art  for personalized FL with our theoretical guarantees, we show by extensive experiments that \algname{Scafflix} is efficient in real-world learning setups and outperforms existing algorithms.

 \section{Related work}

 \subsection{Local Training (LT) methods in Federated Learning (FL)}\label{secLT}
 
Theoretical evolutions of LT in FL have been long-lasting, spanning five generations from empirical results to accelerated communication complexity. The celebrated \algname{FedAvg} algorithm proposed by \citet{FL2017-AISTATS} showed the feasibility of communication-efficient learning from decentralized data. It belongs to the first generation of LT methods, where the focus was on empirical results and practical validations \citep{Povey2015, SparkNet2016, FL2017-AISTATS}.

The second generation of studies on LT for solving \eqref{eq:ERM} was based on homogeneity assumptions, such as bounded gradients \big($\exists c<+\infty, {\norm{\nabla f_i(x)}} \leq c, x\in\Rd$, $i\in[n]$\big) \citep{Li-local-bounded-grad-norms--ICLR2020} and bounded gradient diversity \big($\avein \sqn{\nabla f_i(x)} \leq c\sqn{\nabla f(x)}$\big) \citep{LocalDescent2019}. However, these assumptions are too restrictive and do not hold in practical FL settings~\citep{FL-big, FieldGuide2021}.

The third generation of approaches, under generic assumptions on the convexity and smoothness of the functions, exhibited sublinear convergence \citep{localGD, localSGD-AISTATS2020} or linear convergence to a neighborhood \citep{mal20}. 

Recently, 
popular algorithms have emerged, such as \algname{Scaffold} \citep{SCAFFOLD}, \algname{S-Local-GD} \citep{LSGDunified2020}, and \algname{FedLin} \citep{FEDLIN}, successfully correcting for the client drift and enjoying linear convergence to an exact solution under standard assumptions. However, their communication complexity remains the same as with \algname{GD}, namely $\mathcal{O}(\kappa\log \epsilon^{-1})$, where $\kappa \eqdef L/\mu$ is the condition number.

Finally, \algname{Scaffnew} was proposed by \citet{ProxSkip}, with accelerated communication complexity $\mathcal{O}(\sqrt{\kappa}\log \epsilon^{-1})$.  This is a major achievement, which proves for the first time that LT is a communication acceleration mechanism. Thus,  \algname{Scaffnew} is the first algorithm in what can be considered the fifth generation of LT-based methods with accelerated convergence. 
 Subsequent works have further extended \algname{Scaffnew} with features such as variance-reduced stochastic gradients~\citep{ProxSkip-VR}, compression \citep{CompressedScaffnew}, partial client participation \citep{con23tam}, asynchronous communication of different clients \citep{GradSkip}, and to a general primal--dual framework \citep{con22rp}. The fifth generation of LT-based methods also includes the \algname{5GCS} algorithm \citep{mal22b}, based on a different approach: the local steps correspond to an inner loop to compute a proximity operator inexactly.  Our proposed algorithm \algname{Scafflix} generalizes  \algname{Scaffnew} and enjoys even better accelerated communication complexity, thanks to a better dependence on the possibly different condition numbers of the functions $f_i$.

 \subsection{Personalization in FL}\label{secper}
 
We can distinguish three main approaches to achieve personalization:

a) One-stage training of a single global model using personalization algorithms. One common scheme is to design a suitable regularizer to balance between current and past local models~\citep{MOON} or between global and local models~\citep{FedProx, hanzely2020federated}. 
The FLIX model~\citep{FLIX}
achieves explicit personalization by balancing the local and global model using interpolation. Meta-learning is also popular in this thread, as evidenced by \citet{pFedMe}, which proposes a federated meta-learning framework that utilizes Moreau envelopes and a regularizer to balance personalization and generalization.

b) Training a global model and fine-tuning every local client or knowledge transfer/distillation. This approach allows knowledge transfer from a source domain trained in the FL manner to target domains~\citep{FedMD}, which is especially useful for personalization in healthcare domains~\citep{FedHealth, FedSteg}.

c) Collaborative training between the global model and local models. The basic idea behind this approach is that each local client trains some personalized parts of a large model, such as the last few layers of a neural network. Parameter decoupling enables learning of task-specific representations for better personalization~\citep{arivazhagan2019federated, bui2019federated}, while channel sparsity encourages each local client to train the neural network with sparsity based on their limited computation resources~\citep{FjORD, FedRolex, FLANC}.

Despite the significant progress made in FL personalization, many approaches only present empirical results. Our approach benefits from
 the  simplicity and efficiency of the FLIX framework and enjoys accelerated convergence.
 
\section{Proposed algorithm \algname{Scafflix} and convergence analysis}

We generalize \algname{Scaffnew}~\citep{ProxSkip} and propose Individualized-Scaffnew (\algname{i-Scaffnew}), shown as Algorithm~\ref{alg1} in the Appendix. Its novelty with respect to \algname{Scaffnew} is to make use of different stepsizes $\gamma_i$ for the local SGD steps, in order to exploit the possibly different values of $L_i$ and $\mu_i$, as well as the different properties $A_i$ and $C_i$ of the stochastic gradients. This change is not straightforward and requires to rederive the whole proof with a different Lyapunov function and to formally endow $\mathbb{R}^d$ with a different inner product at every client. 

We then apply and tune \algname{i-Scaffnew} for the problem \eqref{eq:FLIX} and propose our new algorithm for personalized FL, which we call \algname{Scafflix}, shown as Algorithm \ref{alg:scafflix}.

We analyze \algname{Scafflix} 
in the strongly convex case, 
because the analysis of linear convergence rates in this setting gives clear insights and allows us to deepen our theoretical understanding of LT and personalization. And to the best of our knowledge, there is no analysis of \algname{Scaffnew} in the nonconvex setting. 
But we conduct several nonconvex deep learning experiments  to show that our theoretical findings also hold in practice.\smallskip

\begin{algorithm}[t]
	\caption{\algname{Scafflix} for \eqref{eq:FLIX}}
	\label{alg:scafflix}
	\begin{algorithmic}[1]
		\STATE \textbf{input:}  stepsizes $\gamma_1>0,\ldots,\gamma_n>0$; probability $p \in (0,1]$; initial estimates $x_1^0,\ldots,x_n^0 \in \mathbb{R}^d$ and ${\red h_1^0, \ldots, h_n^0 }\in \mathbb{R}^d$ such that $\sum_{i=1}^n {\red h_i^0}=0$, personalization weights $\alpha_1,\ldots,\alpha_n$
		\STATE at the server, $\gamma \eqdef \left(\frac{1}{n}\sum_{i=1}^n \alpha_i^2\gamma_i^{-1}\right)^{-1}$ 
		\hfill $\diamond$ {\small\color{gray} $\gamma$ is used by the server at Step 11}
		\STATE at clients in parallel, $x_i^\star\eqdef \arg\min f_i$\hfill $\diamond$ {\small\color{gray} not needed if $\alpha_i=1$}
		\FOR{$t=0,1,\ldots$}
		\STATE flip a coin $\theta^t \eqdef \{1$ with probability $p$, 0 otherwise$\}$
		\FOR{$i=1,\ldots,n$, at clients in parallel,}
		\STATE $\tilde{x}_i^t \eqdef \alpha_i x_i^t + (1-\alpha_i) x_i^\star$ \hfill $\diamond$ {\small\color{gray} estimate of the personalized model $\tilde{x}_i^\star$}
		\STATE compute an estimate $g_i^t$ of $\nabla f_i(\tilde{x}_i^t)$
\STATE $\hat{x}_i^t\eqdef x_i^t -\frac{\gamma_i}{\alpha_i} \big( g_i^t - {\red h_i^t}\big)$ 
\hfill $\diamond$ {\small\color{gray} local SGD step}
\IF{$\theta^t=1$}
\STATE send $\frac{\alpha_i^2}{\gamma_i}\hat{x}_i^t$ to the server, which aggregates $\bar{x}^t\eqdef \frac{\gamma}{n}\sum_{j=1}^n  \frac{\alpha_i^2}{\gamma_i}\hat{x}_{j}^t $ and broadcasts it to all clients \hfill $\diamond$ {\small\color{gray} communication, but only with small probability $p$}
\STATE $x_i^{t+1}\eqdef \bar{x}^{t}$
\STATE ${\red h_i^{t+1}}\eqdef {\red h_i^t} + \frac{p \alpha_i}{\gamma_i}\big(\bar{x}^{t}-\hat{x}_i^t\big)$\hfill $\diamond$ {\small\color{gray}update of the local control variate $\red h_i^t$}
\ELSE
\STATE $x_i^{t+1}\eqdef \hat{x}_i^t$
\STATE ${\red h_i^{t+1}}\eqdef {\red h_i^t}$
\ENDIF
\ENDFOR
		\ENDFOR
	\end{algorithmic}
\end{algorithm}

\begin{assumption}[Smoothness and strong convexity]\label{ass:convex_smooth}
In the problem \eqref{eq:FLIX} (and \eqref{eq:ERM} as the particular case $\alpha_i\equiv 1$), we assume that for every $i\in[n]$, the  function $f_i$ is $L_i$-smooth and $\mu_i$-strongly convex,\footnote{A function $f:\mathbb{R}^d\rightarrow \mathbb{R}$ is said to be $L$-smooth if it is differentiable and its gradient is Lipschitz continuous with constant $L$; that is, for every $x\in\mathbb{R}^d$ and $y\in\mathbb{R}^d$, $\|\nabla f(x)-\nabla f(y)\|\leq L \|x-y\|$, where, here and throughout the paper, the norm is the Euclidean norm. $f$ is said to be $\mu$-strongly convex if $f-\frac{\mu}{2}\|\cdot\|^2$ is convex. We refer to \citet{bau17} for such standard notions of convex analysis.} for some $L_i\geq \mu_i>0$. 
  This implies that the problem % \eqref{eq:FLIX} 
  is strongly convex, so that its solution $\xstar$  exists and is unique. 
\end{assumption}

We also make the two following assumptions on the stochastic gradients $g_i^t$ used in \algname{Scafflix} (and \algname{i-Scaffnew} as a particular case with $\alpha_i\equiv 1)$.
\begin{assumption}[Unbiasedness]\label{ass:unbiasedness}
 We assume that for every $t\geq 0$ and $i\in[n]$, $g_i^t$ is an unbiased estimate of $\nabla f_i(\tilde{x}_i^t)$; that is,
    $$
        \Exp{g_{i}^t \;|\; \tilde{x}_i^{t}} = \nabla f_i (\tilde{x}_i^{t}).
    $$
\end{assumption}

To characterize  unbiased stochastic gradient estimates, the modern notion of \emph{expected smoothness} is well suited  \citep{gow19,gor202}:
\begin{assumption}[Expected smoothness]\label{ass:expected_smoothness}
We assume that, for every $i\in [n]$, there exist constants
 $A_i\geq L_i $
 \footnote{We can suppose $A_i\geq L_i$. Indeed, we have the bias-variance decomposition $\Exp{\sqn{g_i^{t}- \nabla f_i(\tilde{x}_i^\star)}\;|\; \tilde{x}_i^{t}} =  \sqn{\nabla f_i(\tilde{x}_i^t)- \nabla f_i(\tilde{x}_i^\star)} + \Exp{\sqn{g_i^{t}- \nabla f_i(\tilde{x}_i^t)}\;|\; \tilde{x}_i^{t}}\geq \sqn{\nabla f_i(\tilde{x}_i^t)- \nabla f_i(\tilde{x}_i^\star)}$. Assuming that $L_i$ is the best known smoothness constant of $f_i$, we cannot improve the constant $L_i$ such that for every $x\in\mathbb{R}^d$, $\sqn{\nabla f_i(x)- \nabla f_i(\tilde{x}_i^\star)}\leq 2L_i D_{f_i}(x, \tilde{x}_i^\star)$. Therefore, $A_i$ in \eqref{ass:expected_smoothness_n} has to be $\geq L_i$.
}
 and $C_i \geq 0$ such that, for every $t\geq 0$,
    \begin{equation}\label{ass:expected_smoothness_n}
        \Exp{\sqn{g_i^{t}- \nabla f_i(\tilde{x}_i^\star)}\;|\; \tilde{x}_i^{t}} \leq 2A_i D_{f_i}(\tilde{x}_i^{t}, \tilde{x}_i^\star) + C_i,
    \end{equation}
    where $D_\varphi(x,x')\eqdef f(x)-f(x')-\langle \nabla f(x'),x-x'\rangle \geq 0$ denotes the Bregman divergence of a function $\varphi$ at points $x,x' \in\mathbb{R}^d$. 
\end{assumption}
Thus, unlike the analysis in~\citet{ProxSkip}[Assumption 4.1], where the same constants are assumed for all clients, 
since we consider personalization, we individualize the analysis: we consider that each client can be different and use stochastic gradients characterized by its own constants $A_i$ and $C_i$. This is more representative of practical settings. 
Assumption~\ref{ass:expected_smoothness} is general and covers in particular the following two important cases  \citep{gow19}:
\begin{enumerate}
\item(bounded variance)\ \ If $g_i^{t}$ is equal to $\nabla f_i(\tilde{x}_i^t)$ plus a zero-mean random error of variance $\sigma_i^2$ (this covers the case of the exact gradient $g_i^{t}=\nabla f_i(\tilde{x}_i^t)$ with $\sigma_i=0$), then Assumption~\ref{ass:expected_smoothness} is satisfied with $A_i = L_i$ and $C_i = \sigma_i^2$.
\item(sampling)\ \  If $f_i=\frac{1}{n_i}\sum_{j=1}^{n_i}f_{i,j}$ for some $L_i$-smooth functions $f_{i,j}$ and $g_i^t = \nabla f_{i,j^t}(\tilde{x}_i^t)$ for some $j^t$ chosen uniformly at random in $[n_i]$, then Assumption~\ref{ass:expected_smoothness} is satisfied with $A_i = 2L_i$ and $C_i = \big(\frac{2}{n_i}\sum_{j=1}^{n_i}\sqnorm{\nabla f_{i,j}(\tilde{x}_i^\star)}\big)-2\sqnorm{\nabla f_{i}(\tilde{x}_i^\star)}$ (this can be extended to minibatch and nonuniform sampling).
\end{enumerate}

We now present our main convergence result:

\begin{theorem}[fast linear convergence]\label{theo2}
In \eqref{eq:FLIX} and \algname{Scafflix}, suppose that Assumptions~\ref{ass:convex_smooth}, \ref{ass:unbiasedness}, \ref{ass:expected_smoothness} hold and that for every $i\in[n]$, $0<\gamma_i \leq \frac{1}{ A_i}$.
For every $t\geq 0$, define the Lyapunov function
\begin{equation}
\Psi^{t}\eqdef  
\frac{1}{n}\sum_{i=1}^n \frac{\gamma_{\min}}{\gamma_i} \sqnorm{\tilde{x}_i^t-\tilde{x}_i^\star}+ \frac{\gamma_{\min}}{p^2}\frac{1}{n}\sum_{i=1}^n \gamma_i \sqnorm{h_i^t-\nabla f_i(\tilde{x}_i^\star)},\label{eqlya1j}
\end{equation}
where $\gamma_{\min} \eqdef \min_{i\in[n]} \gamma_i$.
Then 
\algname{Scafflix}
converges linearly:  for every $t\geq 0$, 
%\begin{align}
\begin{equation}
\Exp{\Psi^{t}}\leq (1-\zeta)^t \Psi^0 + \frac{\gamma_{\min}}{\zeta} \frac{1}{n}\sum_{i=1}^n \gamma_i  C_i,\label{eqr1}
\end{equation}
%\end{align}
where 
\begin{equation}
\zeta = \min\left(\min_{i\in[n]} \gamma_i\mu_i,p^2\right).\label{eqrate2j1}
\end{equation}
\end{theorem}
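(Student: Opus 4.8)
The plan is to reduce \algname{Scafflix} on \eqref{eq:FLIX} to an execution of \algname{i-Scaffnew} on an equivalent problem, and then to prove a linear-convergence theorem for \algname{i-Scaffnew} directly. Observe that Step~7 sets $\tilde x_i^t = \alpha_i x_i^t + (1-\alpha_i)x_i^\star$, so if we define $y_i^t \eqdef \tilde x_i^t$ and $\phi_i(y)\eqdef f_i(y)$, then the local SGD step (Step~9) in the variable $x_i^t$ becomes, after multiplying by $\alpha_i$ and using $y_i^{t} - \alpha_i\hat x_i^t - (1-\alpha_i)x_i^\star$-type bookkeeping, exactly an \algname{i-Scaffnew} step in the $y$-variables with stepsize $\gamma_i$ on the functions $y\mapsto f_i(y)$ but with the "effective" strong convexity and smoothness rescaled appropriately. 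Concretely, I would first establish the algebraic dictionary showing that Steps 7--16 of \algname{Scafflix} are \algname{i-Scaffnew} applied to $\min_y \frac1n\sum_i f_i(y)$ started from $y_i^0 = \tilde x_i^0$, where the personalization factors $\alpha_i$ get absorbed into the stepsizes/metric exactly so that $\gamma$ at Step~2 is the harmonically-averaged stepsize; then Theorem~\ref{theo2} for \algname{Scafflix} follows from the corresponding \algname{i-Scaffnew} theorem by a change of variables.

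For the core \algname{i-Scaffnew} estimate I would follow the \algname{ProxSkip}/\algname{Scaffnew} template adapted to client-dependent stepsizes and the per-client inner products. The steps, in order: (1) endow $\R^d$ at client $i$ with the inner product $\langle u,v\rangle_i \eqdef \gamma_i^{-1}\langle u,v\rangle$ (hence the weights $\gamma_{\min}/\gamma_i$ appearing in $\Psi^t$), and interpret the aggregation step as an orthogonal projection in the product space onto the "consensus" subspace with respect to the averaged metric; (2) write one step of the primal variable and the control-variate update, condition on $\tilde x_i^t$ and $\theta^t$; (3) on the event $\theta^t=0$ bound $\Exp{\sum_i(\gamma_{\min}/\gamma_i)\sqnorm{\tilde x_i^{t+1}-\tilde x_i^\star}}$ using $\gamma_i$-smoothness/strong convexity of $f_i$ and the expected-smoothness Assumption~\ref{ass:expected_smoothness} to control $\Exp{\sqnorm{g_i^t-\nabla f_i(\tilde x_i^\star)}}$; the stepsize constraint $\gamma_i\le 1/A_i$ is exactly what makes the $D_{f_i}$ term sign-definite and kills it, leaving a $(1-\gamma_i\mu_i)$ contraction plus the additive $\gamma_i C_i$ noise term; (4) on the event $\theta^t=1$, use that the projection is nonexpansive in the averaged metric and that, by the choice $h_i^{t+1}=h_i^t+\frac{p\alpha_i}{\gamma_i}(\bar x^t-\hat x_i^t)$, the combination $\sum_i(\gamma_{\min}/\gamma_i)\sqnorm{x_i^{t+1}-x_i^\star}+\frac{\gamma_{\min}}{p^2}\sum_i\gamma_i\sqnorm{h_i^{t+1}-\nabla f_i(\tilde x_i^\star)}$ telescopes with a $(1-p^2)$-type factor on the dual part — this is the standard \algname{ProxSkip} "the control variate absorbs the projection residual" computation; (5) combine the two events with weights $p$ and $1-p$, recognize that the primal part contracts at rate $1-\min_i\gamma_i\mu_i$ and the dual part at rate $1-p^2$, so the whole Lyapunov function $\Psi^t$ contracts at rate $1-\zeta$ with $\zeta=\min(\min_i\gamma_i\mu_i,p^2)$, and unroll the recursion $\Exp{\Psi^{t+1}}\le(1-\zeta)\Exp{\Psi^t}+\gamma_{\min}\frac1n\sum_i\gamma_iC_i$ into the geometric-series bound \eqref{eqr1}.

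I expect the main obstacle to be step (4): getting the dual (control-variate) term to telescope correctly with the client-dependent metric and the $\alpha_i$ scaling. In the original \algname{Scaffnew} proof this hinges on an exact orthogonality/Pythagoras identity between the update direction and the consensus subspace, and with different $\gamma_i$ per client one must use the right weighted inner product on the product space $\R^{nd}$ (the one with block weights $\alpha_i^2/\gamma_i$, matching Step~11's aggregation weights) for that orthogonality to hold; choosing any other metric breaks the cancellation. A secondary technical point is verifying that the fixed point of the iteration is indeed $(\tilde x_i^\star,\ldots,\tilde x_i^\star)$ on the primal side and $(\nabla f_i(\tilde x_i^\star))_i$ on the dual side, together with $\sum_i(\alpha_i^2/\gamma_i)\nabla f_i(\tilde x_i^\star)/\ldots = 0$ (the optimality condition for \eqref{eq:FLIX}), which is what lets us center the Lyapunov function there and is preserved by the $\sum_i h_i^0=0$ initialization. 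Once the metric is chosen correctly, the remaining computations are the routine smoothness/strong-convexity and conditional-expectation manipulations sketched above.
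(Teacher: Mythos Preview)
Your plan is essentially the paper's approach: prove a convergence theorem for \algname{i-Scaffnew} using a weighted inner product on $\R^{nd}$ (so that aggregation is an orthogonal projection and the Pythagoras cancellation goes through), and then obtain Theorem~\ref{theo2} by identifying \algname{Scafflix} as an instance of \algname{i-Scaffnew} via a change of variables. Your steps (1)--(5) match the paper's \algname{i-Scaffnew} proof closely; in particular your emphasis that the block weights must be $\alpha_i^2/\gamma_i$ to make the aggregation in Step~11 an orthogonal projection is exactly the crux.

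One clarification on the reduction, however: the dictionary you sketch at the start --- ``set $y_i^t\eqdef\tilde x_i^t$, $\phi_i\eqdef f_i$, and view the iteration as \algname{i-Scaffnew} in the $y$-variables with stepsizes $\gamma_i$'' --- does not work as stated, because after communication the $\tilde x_i^{t+1}=\alpha_i\bar x^t+(1-\alpha_i)x_i^\star$ are \emph{not} equal across clients, so in the $y$-coordinates the communication step is not the consensus projection that \algname{i-Scaffnew} performs. The paper's (equivalent but cleaner) reduction stays in the $x$-variables and applies \algname{i-Scaffnew} to the composed functions $\tilde f_i(x)\eqdef f_i\big(\alpha_i x+(1-\alpha_i)x_i^\star\big)$: these are $\alpha_i^2 L_i$-smooth and $\alpha_i^2\mu_i$-strongly convex, $\alpha_i g_i^t$ is an unbiased estimate of $\nabla\tilde f_i(x_i^t)$ with expected-smoothness constants $(\alpha_i^2 A_i,\alpha_i^2 C_i)$, and $D_{f_i}(\tilde x_i^t,\tilde x_i^\star)=D_{\tilde f_i}(x_i^t,x^\star)$. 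Substituting $\gamma_i\to\alpha_i^{-2}\gamma_i$ and $h_i^t\to\alpha_i h_i^t$ in \algname{i-Scaffnew} then reproduces \algname{Scafflix} line by line (this is where the $\alpha_i^2/\gamma_i$ weights in Step~11 come from), and all the $\alpha_i$'s cancel in the rate, yielding \eqref{eqr1}--\eqref{eqrate2j1}. Your later remarks about the $\alpha_i^2/\gamma_i$ metric show you effectively have this picture; just make the reduction in the $x$-variables with $\tilde f_i$, not in the $\tilde x$-variables with $f_i$.
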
%\medskip

It is important to note that the range of the stepsizes $\gamma_i$, the Lyapunov function $\Psi^t$ and the convergence rate in \eqref{eqr1}--\eqref{eqrate2j1} do not depend on the personalization weights $\alpha_i$; they only play a role in the definition of the personalized models 
$\tilde{x}_i^t$ and $\tilde{x}_i^\star$. 
 Indeed, the convergence speed essentially depends on the conditioning of the functions $x\mapsto f_i\big(\alpha_i x + (1-\alpha_i) x_i^\star\big)$, which are independent from the $\alpha_i$. More precisely, let us define, for every $i\in[n]$,
\begin{equation*}
\kappa_i \eqdef \frac{L_i}{\mu_i} \geq 1\quad \mbox{and}\quad \kappa_{\max} = \max_{i\in[n]} \kappa_i,
\end{equation*}
and let us study the complexity of of \algname{Scafflix} to reach $\epsilon$-accuracy, i.e.\ $\Exp{\Psi^{t}}\leq \epsilon$.
If, for every $i\in[n]$, $C_i=0$, $A_i=\Theta(L_i)$, and $\gamma_i=\Theta(\frac{1}{A_i})=\Theta(\frac{1}{L_i})$, 
the iteration complexity of \algname{Scafflix} is
\begin{equation}
\mathcal{O}\left(\left(\kappa_{\max}+\frac{1}{p^2}\right)\log (\Psi^0\epsilon^{-1})
\right).
\end{equation}
And since communication occurs with probability $p$, the communication complexity of \algname{Scafflix} is
\begin{equation}
\mathcal{O}\left(\left(p\kappa_{\max}+\frac{1}{p}\right)\log (\Psi^0\epsilon^{-1})
\right).
\end{equation}
Note that $\kappa_{\max}$ can be much smaller than $\kappa_\mathrm{global}\eqdef\frac{\max_i L_i}{\min_i \mu_i}$, which is the condition number that appears in the rate of \algname{Scaffnew} with $\gamma = \frac{1}{\max_i A_i}$. Thus, \algname{Scafflix} is much more versatile and adapted to FL with heterogeneous data than \algname{Scaffnew}.\smallskip

\begin{corollary}[case $C_i\equiv 0$]\label{cor1}
In the conditions of Theorem~\ref{theo2}, if $p=\Theta\big(\frac{1}{\sqrt{\kappa_{\max}}}\big)$ and, for every $i\in[n]$, $C_i=0$, $A_i=\Theta(L_i)$, and $\gamma_i=\Theta(\frac{1}{A_i})=\Theta(\frac{1}{L_i})$,  the communication complexity of \algname{Scafflix} 
is
\begin{equation}
\mathcal{O}\left(\sqrt{\kappa_{\max}}\log (\Psi^0\epsilon^{-1})
\right).
\end{equation}
\end{corollary}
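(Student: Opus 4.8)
The plan is to derive Corollary~\ref{cor1} as a direct specialization of Theorem~\ref{theo2}, so the bulk of the work reduces to substituting the stated parameter choices into the rate \eqref{eqr1} and the contraction factor \eqref{eqrate2j1}, and then counting. First I would observe that under the assumptions $C_i=0$ for all $i$, the additive (noise) term $\frac{\gamma_{\min}}{\zeta}\frac{1}{n}\sum_i \gamma_i C_i$ in \eqref{eqr1} vanishes, leaving the clean geometric bound $\Exp{\Psi^t}\le (1-\zeta)^t\Psi^0$. Next, with $A_i=\Theta(L_i)$ and the choice $\gamma_i=\Theta(1/A_i)=\Theta(1/L_i)$ (which is admissible since it satisfies $0<\gamma_i\le 1/A_i$ up to the constant, so one picks the constant to respect the bound), each product $\gamma_i\mu_i=\Theta(\mu_i/L_i)=\Theta(1/\kappa_i)$, hence $\min_{i}\gamma_i\mu_i=\Theta(1/\kappa_{\max})$. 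Plugging in $p=\Theta(1/\sqrt{\kappa_{\max}})$ gives $p^2=\Theta(1/\kappa_{\max})$, so both terms in the minimum defining $\zeta$ in \eqref{eqrate2j1} are $\Theta(1/\kappa_{\max})$, and therefore $\zeta=\Theta(1/\kappa_{\max})$.

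The second step is the standard iteration-count argument: to guarantee $\Exp{\Psi^t}\le\epsilon$ it suffices that $(1-\zeta)^t\Psi^0\le\epsilon$, i.e.\ $t\ge \frac{1}{-\log(1-\zeta)}\log(\Psi^0/\epsilon)$, and using $-\log(1-\zeta)\ge \zeta$ together with $\zeta=\Theta(1/\kappa_{\max})$ this is implied by $t=\mathcal{O}(\kappa_{\max}\log(\Psi^0\epsilon^{-1}))$ iterations. This matches the iteration complexity already displayed in the discussion after Theorem~\ref{theo2} with the extra simplification that the $1/p^2$ term is now also $\Theta(\kappa_{\max})$ and so does not worsen the bound. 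Since each iteration performs a communication round only when the coin flip $\theta^t=1$, which happens independently with probability $p$, the expected number of communication rounds among $t$ iterations is $pt$; substituting $p=\Theta(1/\sqrt{\kappa_{\max}})$ and $t=\mathcal{O}(\kappa_{\max}\log(\Psi^0\epsilon^{-1}))$ yields the claimed communication complexity $\mathcal{O}(\sqrt{\kappa_{\max}}\log(\Psi^0\epsilon^{-1}))$. Equivalently, one can substitute $p=\Theta(1/\sqrt{\kappa_{\max}})$ directly into the general communication complexity $\mathcal{O}\big((p\kappa_{\max}+1/p)\log(\Psi^0\epsilon^{-1})\big)$ stated above, noting $p\kappa_{\max}=\Theta(\sqrt{\kappa_{\max}})$ and $1/p=\Theta(\sqrt{\kappa_{\max}})$, so the sum is $\Theta(\sqrt{\kappa_{\max}})$.

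There is essentially no hard obstacle here: the corollary is a corollary in the genuine sense, and the only point requiring a modicum of care is verifying that the balancing choice $p=\Theta(1/\sqrt{\kappa_{\max}})$ is exactly the one that equalizes the two competing terms $p\kappa_{\max}$ and $1/p$ (minimizing their sum, which is why this is the optimal probability), and that the implied constants in $\gamma_i=\Theta(1/L_i)$ are chosen small enough to keep $\gamma_i\le 1/A_i$ so that Theorem~\ref{theo2} applies verbatim. I would also remark that $p\in(0,1]$ is respected for $\kappa_{\max}\ge 1$, which holds since each $\kappa_i\ge 1$. That completes the proof.
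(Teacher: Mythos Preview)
Your proposal is correct and follows exactly the paper's (implicit) argument: the discussion immediately preceding Corollary~\ref{cor1} already derives the iteration complexity $\mathcal{O}\big((\kappa_{\max}+1/p^2)\log(\Psi^0\epsilon^{-1})\big)$ and the communication complexity $\mathcal{O}\big((p\kappa_{\max}+1/p)\log(\Psi^0\epsilon^{-1})\big)$ under the hypotheses $C_i=0$, $A_i=\Theta(L_i)$, $\gamma_i=\Theta(1/L_i)$, and the corollary is simply the substitution $p=\Theta(1/\sqrt{\kappa_{\max}})$ that balances the two terms. Your additional remarks on the admissibility of the stepsize constants and the constraint $p\in(0,1]$ are accurate and even slightly more explicit than what the paper spells out.
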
\smallskip

\begin{corollary}[general stochastic gradients]\label{cor2}
In the conditions of Theorem~\ref{theo2}, if $p=\sqrt{\min_{i\in[n]} \gamma_i \mu_i}$ and, for every $i\in[n]$,
\begin{equation}
\gamma_i=\min \left(\frac{1}{A_i},\frac{\epsilon \mu_{\min}}{2 C_i} \right)
\end{equation}
(or $\gamma_i\eqdef \frac{1}{A_i}$ if $C_i=0$),
where $\mu_{\min}\eqdef \min_{j\in[n]} \mu_j$, the iteration complexity of \algname{Scafflix} is
\begin{equation}
\mathcal{O}\!\left(\!\left(\max_{i\in[n]}  \max\left(\frac{A_i}{\mu_i},\frac{C_i}{\epsilon \mu_{\min}\mu_i}\right)\right)\log(\Psi^0\epsilon^{-1})\!\right)
\!=\!\mathcal{O}\!\left(\!\max\left(\max_{i\in[n]}  \frac{A_i}{\mu_i},\max_{i\in[n]} \frac{C_i}{\epsilon \mu_{\min}\mu_i}\right)\log(\Psi^0\epsilon^{-1})\!\right)
\end{equation}
and its communication complexity is
\begin{equation}
\mathcal{O}\left(\!\max\left(\max_{i\in[n]}  \sqrt{\frac{A_i}{\mu_i}},\max_{i\in[n]} \sqrt{\frac{C_i}{\epsilon \mu_{\min}\mu_i}}\right)\log(\Psi^0\epsilon^{-1})\right).
\end{equation}
\end{corollary}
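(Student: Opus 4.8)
The plan is to specialize the convergence bound \eqref{eqr1} of Theorem~\ref{theo2} to the stated choices of $p$ and $\gamma_i$, and then read off the two complexities. First I would check admissibility of the parameters: since $\gamma_i=\min\!\big(\tfrac1{A_i},\tfrac{\epsilon\mu_{\min}}{2C_i}\big)\le\tfrac1{A_i}$, the stepsize restriction of Theorem~\ref{theo2} holds, and because $A_i\ge L_i\ge\mu_i$ we have $\gamma_i\mu_i\le\gamma_iA_i\le1$, so $p=\sqrt{\min_{i\in[n]}\gamma_i\mu_i}\in(0,1]$ as required. With $p^2=\min_{i\in[n]}\gamma_i\mu_i$, the rate \eqref{eqrate2j1} collapses to $\zeta=\min_{i\in[n]}\gamma_i\mu_i$.

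Next I would split the right-hand side of \eqref{eqr1} into its geometric part and its additive (variance) part and force each to be at most $\epsilon/2$. For the variance part, the point of the choice $\gamma_i\le\tfrac{\epsilon\mu_{\min}}{2C_i}$ is that it makes $\gamma_iC_i\le\tfrac{\epsilon\mu_{\min}}{2}$ for every $i$ (trivially so when $C_i=0$), whence $\tfrac1n\sum_i\gamma_iC_i\le\tfrac{\epsilon\mu_{\min}}{2}$; combined with the elementary bound $\gamma_{\min}\mu_{\min}\le\gamma_i\mu_i$ for all $i$, i.e. $\gamma_{\min}\mu_{\min}\le\zeta$, this gives $\tfrac{\gamma_{\min}}{\zeta}\tfrac1n\sum_i\gamma_iC_i\le\tfrac{\gamma_{\min}\mu_{\min}}{\zeta}\cdot\tfrac{\epsilon}{2}\le\tfrac{\epsilon}{2}$. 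For the geometric part, using $(1-\zeta)^t\le e^{-\zeta t}$, the bound $(1-\zeta)^t\Psi^0\le\epsilon/2$ holds as soon as $t\ge\zeta^{-1}\log(2\Psi^0/\epsilon)$; hence $\Exp{\Psi^t}\le\epsilon$ after $\mathcal{O}\big(\zeta^{-1}\log(\Psi^0\epsilon^{-1})\big)$ iterations.

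It then remains to evaluate $\zeta^{-1}$. Since $\gamma_i^{-1}=\max\!\big(A_i,\tfrac{2C_i}{\epsilon\mu_{\min}}\big)$, I get $\tfrac{1}{\gamma_i\mu_i}=\max\!\big(\tfrac{A_i}{\mu_i},\tfrac{2C_i}{\epsilon\mu_{\min}\mu_i}\big)$ and therefore $\zeta^{-1}=\max_{i\in[n]}\tfrac{1}{\gamma_i\mu_i}=\Theta\big(\max_{i\in[n]}\max(\tfrac{A_i}{\mu_i},\tfrac{C_i}{\epsilon\mu_{\min}\mu_i})\big)$, which is the claimed iteration complexity; the alternative form is just $\max_i\max(a_i,b_i)=\max(\max_ia_i,\max_ib_i)$. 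For the communication complexity I would use that a communication round occurs at each iteration independently with probability $p$, so the expected number of communications is $p$ times the iteration count, i.e. $\mathcal{O}\big(p\,\zeta^{-1}\log(\Psi^0\epsilon^{-1})\big)=\mathcal{O}\big(\zeta^{-1/2}\log(\Psi^0\epsilon^{-1})\big)$ since $p=\sqrt\zeta$; plugging in $\zeta^{-1}$ and using $\sqrt{\max(\cdot,\cdot)}=\max(\sqrt{\cdot},\sqrt{\cdot})$ gives the stated bound. No step is genuinely hard here — it is all substitution into Theorem~\ref{theo2} — but the one place to be careful is the variance term, specifically that $\gamma_{\min}$ and $\mu_{\min}$ need not be attained at the same index, which is why the inequality $\gamma_{\min}\mu_{\min}\le\zeta$ (and not an equality) is the right tool.
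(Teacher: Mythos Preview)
Your proof is correct and follows essentially the same approach as the paper's: both split the bound \eqref{eqr1} into its geometric and variance parts, control each by $\epsilon/2$ using exactly the same inequality $\gamma_{\min}\mu_{\min}\le\min_i\gamma_i\mu_i=\zeta$ for the variance term, and then read off $\zeta^{-1}$ from the stepsize choice. Your write-up is in fact slightly more complete than the paper's, since you also verify $p\in(0,1]$ and spell out the communication complexity via $p\cdot\zeta^{-1}=\zeta^{-1/2}$, which the paper leaves implicit.
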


If $A_i=\Theta(L_i)$ uniformly, we have $\max_{i\in[n]}  \sqrt{\frac{A_i}{\mu_i}} = \Theta(\sqrt{\kappa_{\max}})$. Thus, we see that thanks to LT, the communication complexity of \algname{Scafflix} is accelerated, as it depends on $\sqrt{\kappa_{\max}}$ and $\frac{1}{\sqrt{\epsilon}}$.

In the expressions above, the acceleration effect of personalization is not visible: it is ``hidden'' in $\Psi^0$, because every client computes $x_i^t$ but what matters is its personalized model $\tilde{x}_i^t$, and $\sqnorm{\tilde{x}_i^t-\tilde{x}_i^\star}=\alpha_i^2 \sqnorm{x_i^t-x^\star}$.  In particular, assuming that 
$x_1^0 = \cdots = x_n^0 = x^0$ and 
$h_i^0 = \nabla f_i(\tilde{x}_i^0)$, we have 
\begin{equation*}
\Psi^{0}\leq 
\frac{\gamma_{\min}}{n}\sqnorm{x^0-x^\star}\sum_{i=1}^n \alpha_i^2 \left(\frac{1}{\gamma_i}+\frac{\gamma_i L_i^2 }{p^2}\right)
\leq \big(\max_i \alpha_i^2\big)\frac{\gamma_{\min}}{n}\sqnorm{x^0-x^\star}\sum_{i=1}^n  \left(\frac{1}{\gamma_i}+\frac{\gamma_i L_i^2 }{p^2}\right), 
\end{equation*}
and we see that the contribution of every client to the initial gap $\Psi^0$ is weighted by $\alpha_i^2$. Thus,  the smaller the $\alpha_i$, the smaller $\Psi^0$ and the faster the convergence. This is why personalization is an acceleration mechanism in our setting.

\section{Experiments}

\begin{figure}[t]
     \centering
     \begin{subfigure}[b]{0.32\textwidth}
         \centering
         \includegraphics[width=\textwidth]{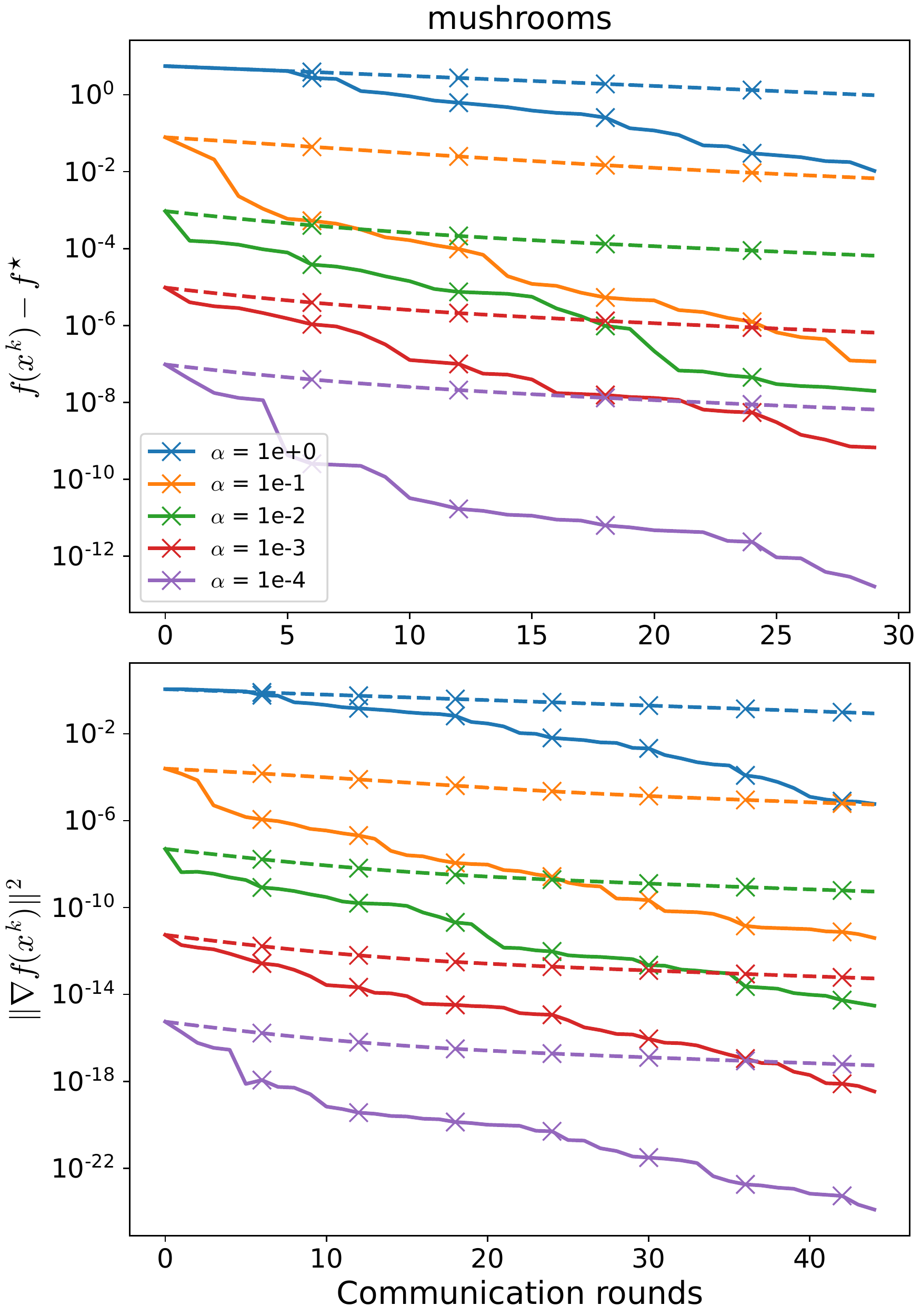}
     \end{subfigure}
     \hfill
     \begin{subfigure}[b]{0.32\textwidth}
         \centering
         \includegraphics[width=\textwidth]{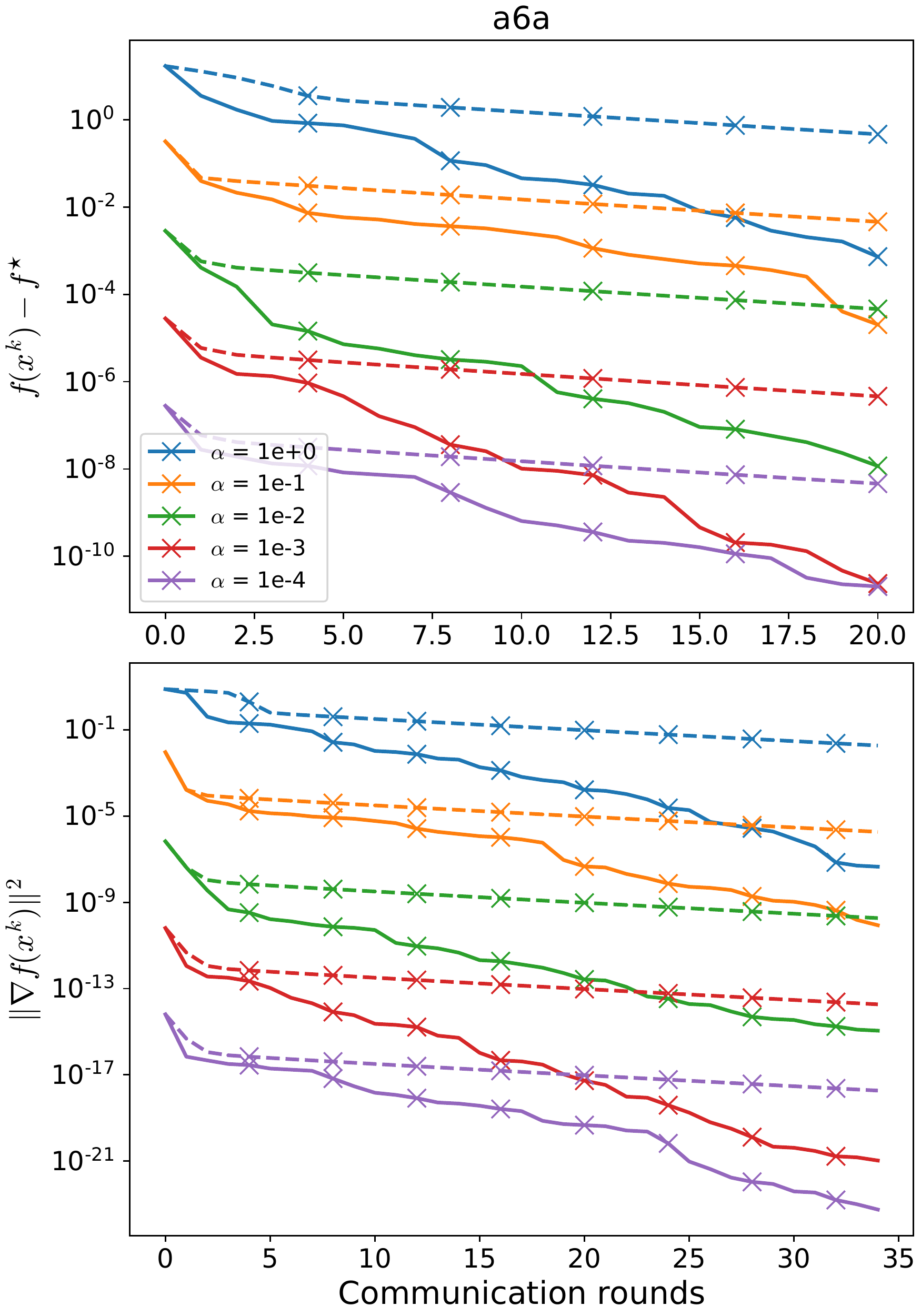}
     \end{subfigure}
     \hfill
     \begin{subfigure}[b]{0.32\textwidth}
         \centering
         \includegraphics[width=\textwidth]{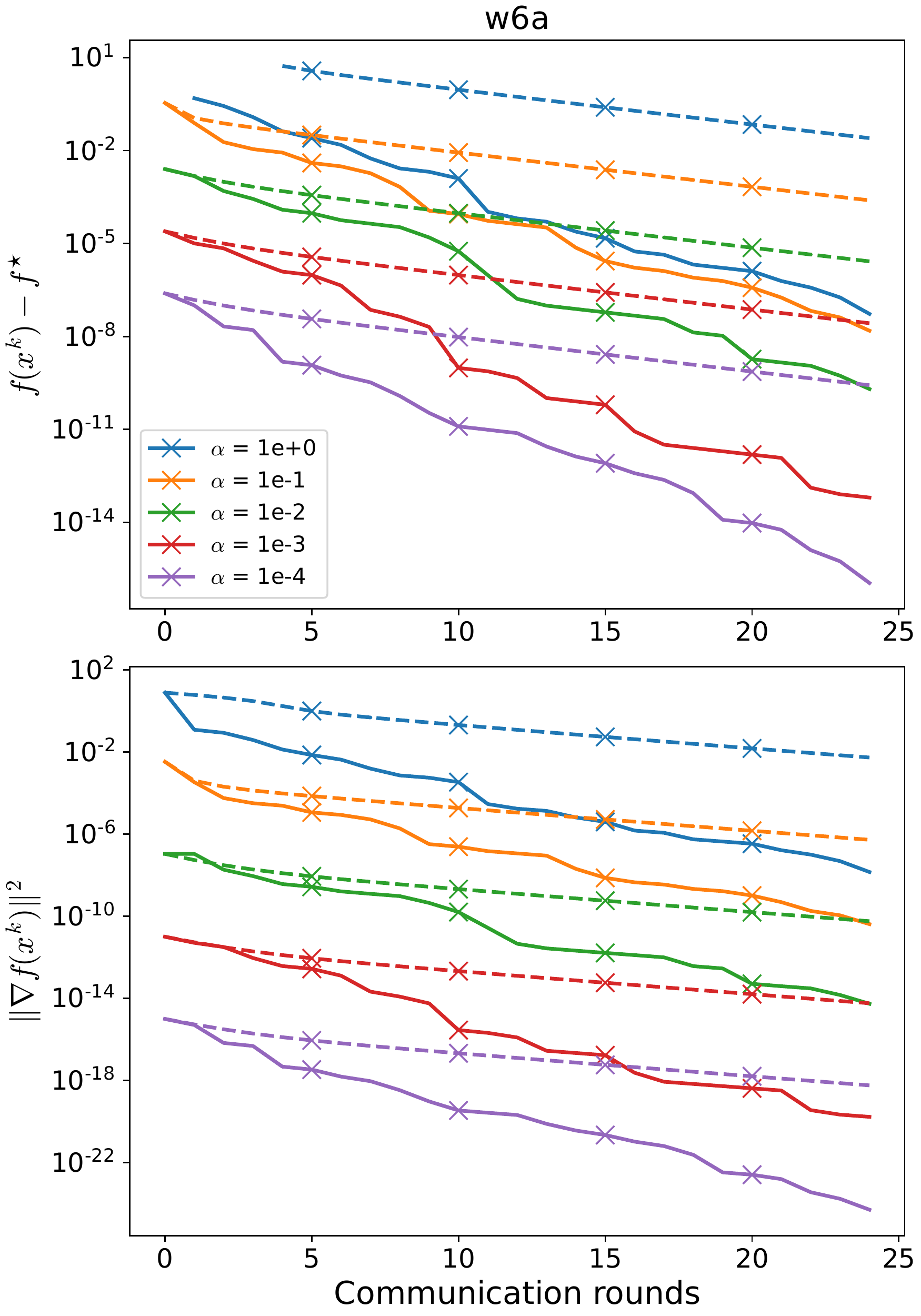}
     \end{subfigure}
        \caption{The objective gap $f(x^k) - f^\star$ and the squared gradient norm $\sqn{\nabla f(x^k)}$ 
        against the number $k$ of communication rounds for \algname{Scafflix} and \algname{GD} on the problem \eqref{eq:FLIX}. We set all $\alpha_i$ to the same value for simplicity. The dashed line represents \algname{GD}, while the solid line represents \algname{Scafflix}. We observe the double communication acceleration achieved through explicit personalization and local training. Specifically, (a) 
        for a given algorithm,
        smaller $\alpha_i$s (i.e.\ more personalized models) lead to faster convergence; (b) comparing 
        the two algorithms,  \algname{Scafflix} is faster than \algname{GD}, thanks to its  local training mechanism.}
        \label{fig:tissue_figure}
\end{figure}

We first consider a convex logistic regression problem to show that the empirical behavior of \algname{Scafflix} is in accordance with the theoretical convergence guarantees available in the convex case. Then, we make extensive experiments of training neural networks on large-scale distributed datasets.\footnote{Code is available
 at \url{https://github.com/WilliamYi96/Scafflix}.}

\subsection{Prelude: Convex Logistic Regression}
We begin our evaluation by considering the standard convex logistic regression problem with an $l_2$ regularizer. This benchmark problem is takes the form \eqref{eq:ERM} with

\begin{equation}
{f}_i{(x)} \eqdef \frac{1}{n_i}\sum_{j=1}^{n_i} \log \left(1 + \exp(-b_{i, j}x^T a_{i, j})\right) + \frac{\mu}{2}\sqn{x},
\end{equation}
where $\mu$ represents the regularization parameter, $n_i$ is the total number of data points present at client $i$; $a_{i, j}$ are the training vectors and the $b_{i, j} \in \{-1, 1\}$ are the corresponding labels. Every function $f_i$ is $\mu$-strongly convex and $L_i$-smooth with $L_i=\frac{1}{4n_i}\sum_{j=1}^{n_i}\sqn{a_{i,j}} + \mu$. We set $\mu$ to $0.1$ for this experiment.  We employ the \texttt{mushrooms}, \texttt{a6a}, and \texttt{w6a} datasets from the LibSVM library~\citep{chang2011libsvm} to conduct these tests. The data is distributed evenly across all clients, and the $\alpha_i$  are set to the same value. The results are shown in Fig.~\ref{fig:tissue_figure}. We can observe the double acceleration effect of our approach, which combines explicit personalization and accelerated local training. Lower $\alpha_i$ values, i.e.\ more personalization, yield faster convergence for both \algname{GD} and \algname{Scafflix}. Moreover, \algname{Scafflix} is much faster than \algname{GD}, thanks to its specialized local training mechanism.

\subsection{Neural Network Training: Datasets and Baselines for Evaluation}
To assess the generalization capabilities of \algname{Scafflix}, we undertake a comprehensive evaluation involving the training of neural networks using two widely-recognized large-scale FL datasets.

\paragraph{Datasets.} Our selection comprises two notable large-scale FL datasets: Federated Extended MNIST (FEMNIST) \citep{leaf}, and Shakespeare \citep{FL2017-AISTATS}. FEMNIST is a character recognition dataset consisting of 671,585 samples. In accordance with the methodology outlined in FedJax~\citep{ro2021fedjax}, we distribute these samples randomly across 3,400 devices. For all algorithms, we employ a Convolutional Neural Network (CNN) model, featuring two convolutional layers and one fully connected layer. The Shakespeare dataset, used for next character prediction tasks, contains a total of 16,068 samples, which we distribute randomly across 1,129 devices. For all algorithms applied to this dataset, we use a Recurrent Neural Network (RNN) model, comprising two Long Short-Term Memory (LSTM) layers and one fully connected layer.

\paragraph{Baselines.} The performance of our proposed \algname{Scafflix} algorithm is benchmarked against prominent baseline algorithms, specifically \algname{FLIX}\citep{FLIX} and \algname{FedAvg}\citep{FedAvg2016}. The \algname{FLIX} algorithm optimizes the \ref{eq:FLIX} objective utilizing the \algname{SGD} method, while \algname{FedAvg} is designed to optimize the \ref{eq:ERM} objective. We employ the official implementations for these benchmark algorithms. Comprehensive hyperparameter tuning is carried out for all algorithms, including \algname{Scafflix}, to ensure optimal results.
For both \algname{FLIX} and \algname{Scafflix}, local training is required to achieve the local minima for each client. By default, we set the local training batch size at $100$ and employ \algname{SGD} with a learning rate selected from the set $C_s \eqdef \{10^{-5}, 10^{-4}, \cdots, 1\}$. Upon obtaining the local optimum, we execute each algorithm with a batch size of $20$ for 1000 communication rounds. The model's learning rate is also selected from the set $C_s$.

\subsection{Analysis of Generalization with Limited Communication Rounds}

In this section, we perform an
in-depth examination of the generalization performance of \algname{Scafflix}, particularly in scenarios with a limited number of training epochs. This investigation is motivated by our theoretical evidence of the double acceleration property of \algname{Scafflix}. 
To that aim, we conduct experiments on both FEMNIST and Shakespeare. These two datasets offer a varied landscape of complexity, allowing for a comprehensive evaluation of our algorithm. In order to ensure a fair comparison with other baseline algorithms, we conducted an extensive search of the optimal hyperparameters for each algorithm. The performance assessment of the generalization capabilities was then carried out on a separate, held-out validation dataset. The hyperparameters that gave the best results in these assessments were selected as the most optimal set. 

In order to examine the impact of personalization, we assume that all clients have same $\alpha_i \equiv \alpha$ and we select $\alpha$ in $\{0.1, 0.3, 0.5, 0.7, 0.9\}$. 
We present the results corresponding to $\alpha=0.1$ in Fig.~\ref{fig:abs01}. Additional comparative analyses with other values of $\alpha$ are available in the Appendix. As shown in Fig.~\ref{fig:abs01}, it is clear that \algname{Scafflix} outperforms the other algorithms in terms of generalization on both the FEMNIST and Shakespeare datasets.
Interestingly, the Shakespeare dataset (next-word prediction) poses a greater challenge compared to the FEMNIST dataset (digit recognition). Despite the increased complexity of the task, \algname{Scafflix} not only delivers significantly better results but also achieves this faster. 
Thus, \algname{Scafflix} is superior both in speed and accuracy.

\begin{figure}[!t] 
	\centering
	\begin{subfigure}[b]{0.48\textwidth}
		\centering
		\includegraphics[trim=0 0 0 0, clip, width=\textwidth]{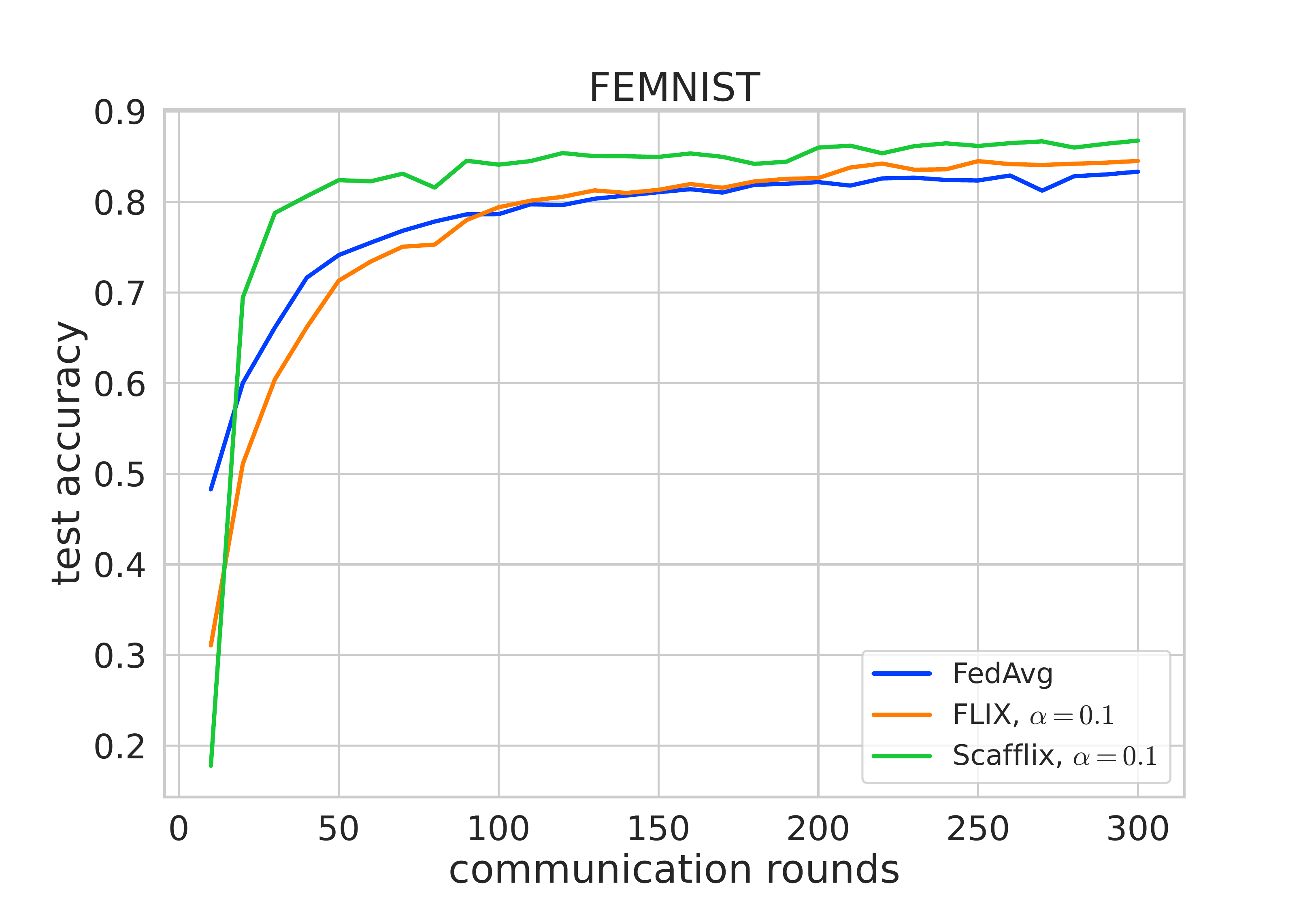}
		% \caption{}
	\end{subfigure}
	\hfill 
	\begin{subfigure}[b]{0.48\textwidth}
		\centering
		\includegraphics[trim=0 0 0 0, clip, width=\textwidth]{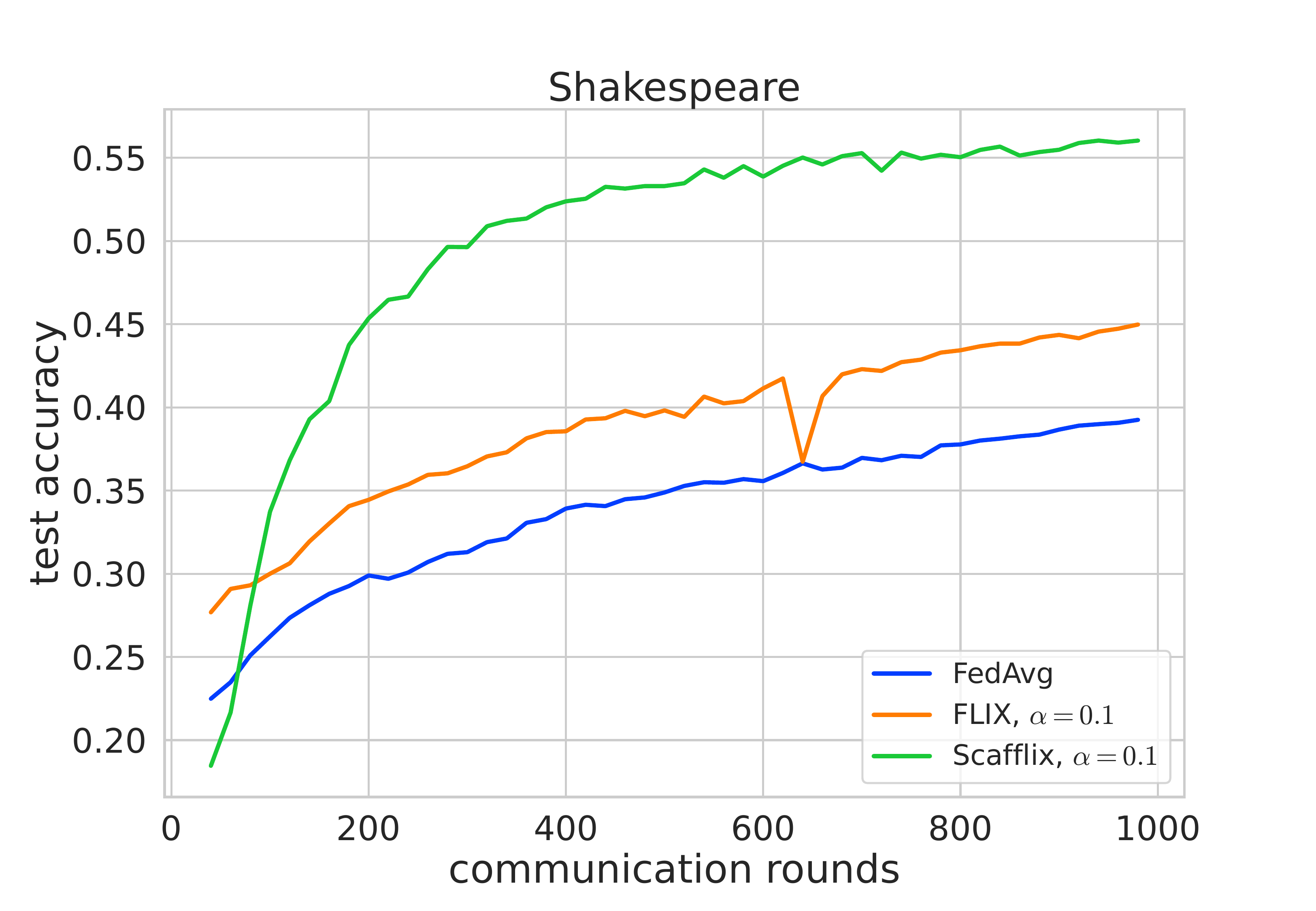}
		% \caption{$\tau=32$}
		\end{subfigure}
	\caption{Comparative generalization analysis with baselines. We set the communication probability to $p=0.2$. The left figure corresponds to the FEMNIST dataset with $\alpha=0.1$, while the right figure corresponds to the Shakespeare dataset with $\alpha=0.3$.}\label{fig:abs01}
\end{figure}

\subsection{Key Ablation Studies}
In this section, we conduct several critical ablation studies to verify the efficacy of our proposed \algname{Scafflix} method. These studies investigate the optimal personalization factor for \algname{Scafflix}, assess the impact of the number of clients per communication round, and examine the influence of the communication probability $p$ in \algname{Scafflix}.

\paragraph{Optimal Personalization Factor.}
In this experiment, we explore the effect of varying personalization factors on the FEMNIST dataset. The results are presented in Fig.~\ref{fig:abs07_a}. We set the batch size to 128 and determine the most suitable learning rate through a hyperparameter search. We consider linearly increasing personalization factors within the set $\{0.1, 0.3, 0.5, 0.7, 0.9\}$. An exponential scale for $\alpha$ is also considered in the Appendix, but the conclusion remains the same.

We note that the optimal personalization factor for the FEMNIST dataset is $0.3$. Interestingly, personalization factors that yield higher accuracy also display a slightly larger variance. However, the overall average performance remains superior. This is consistent with expectations as effective personalization may emphasize the representation of local data, and thus, could be impacted by minor biases in the model parameters received from the server.
  
\begin{figure}[!t]
	\centering
	\begin{subfigure}[b]{0.325\textwidth}
		\centering
		\includegraphics[trim=20 18 20 32, clip, width=\textwidth]{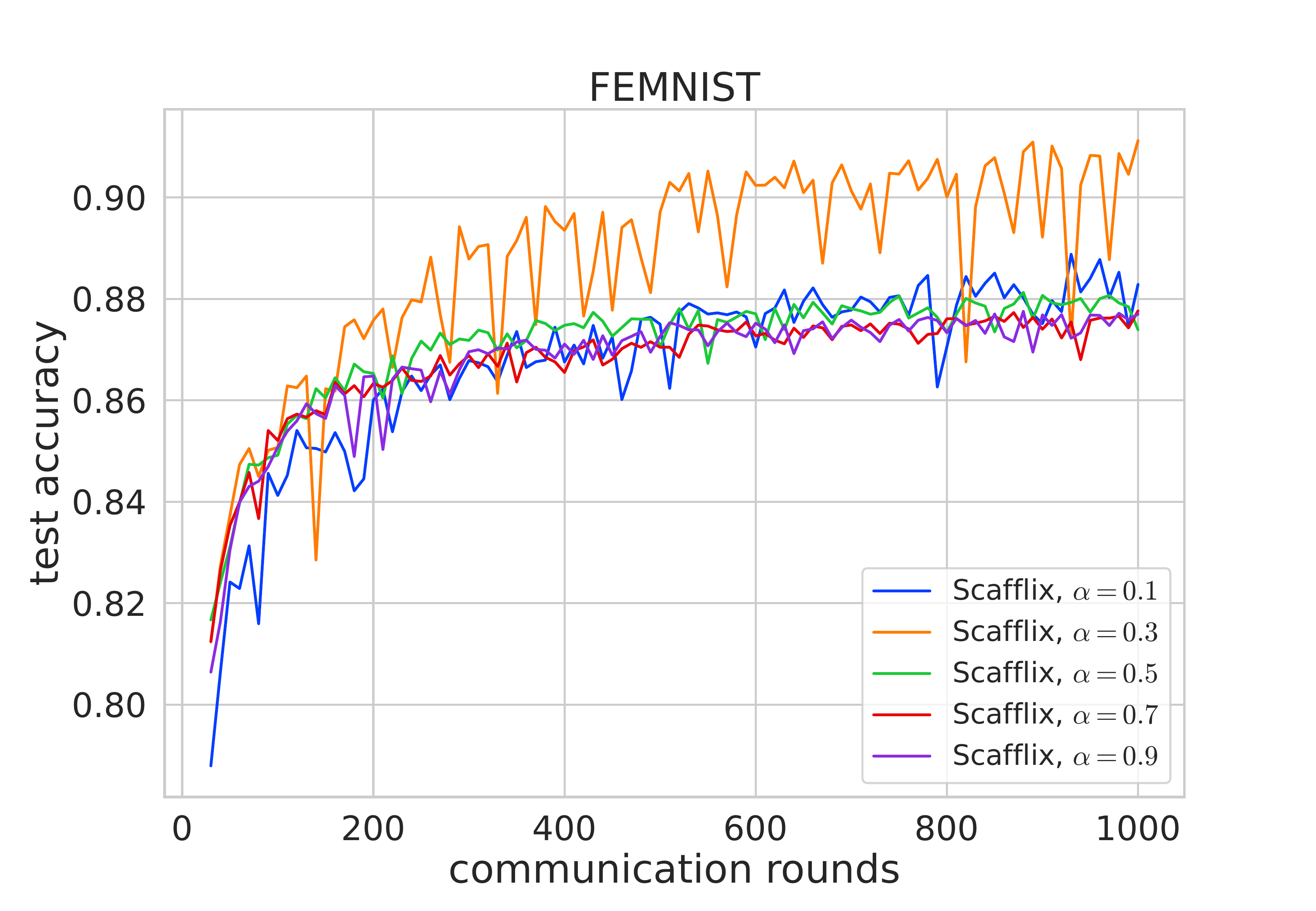}
		\caption{$\alpha$s}\label{fig:abs07_a}
	\end{subfigure}
	\hfill 
	\begin{subfigure}[b]{0.325\textwidth}
		\centering  
		\includegraphics[trim=33 18 20 20, clip, width=\textwidth]{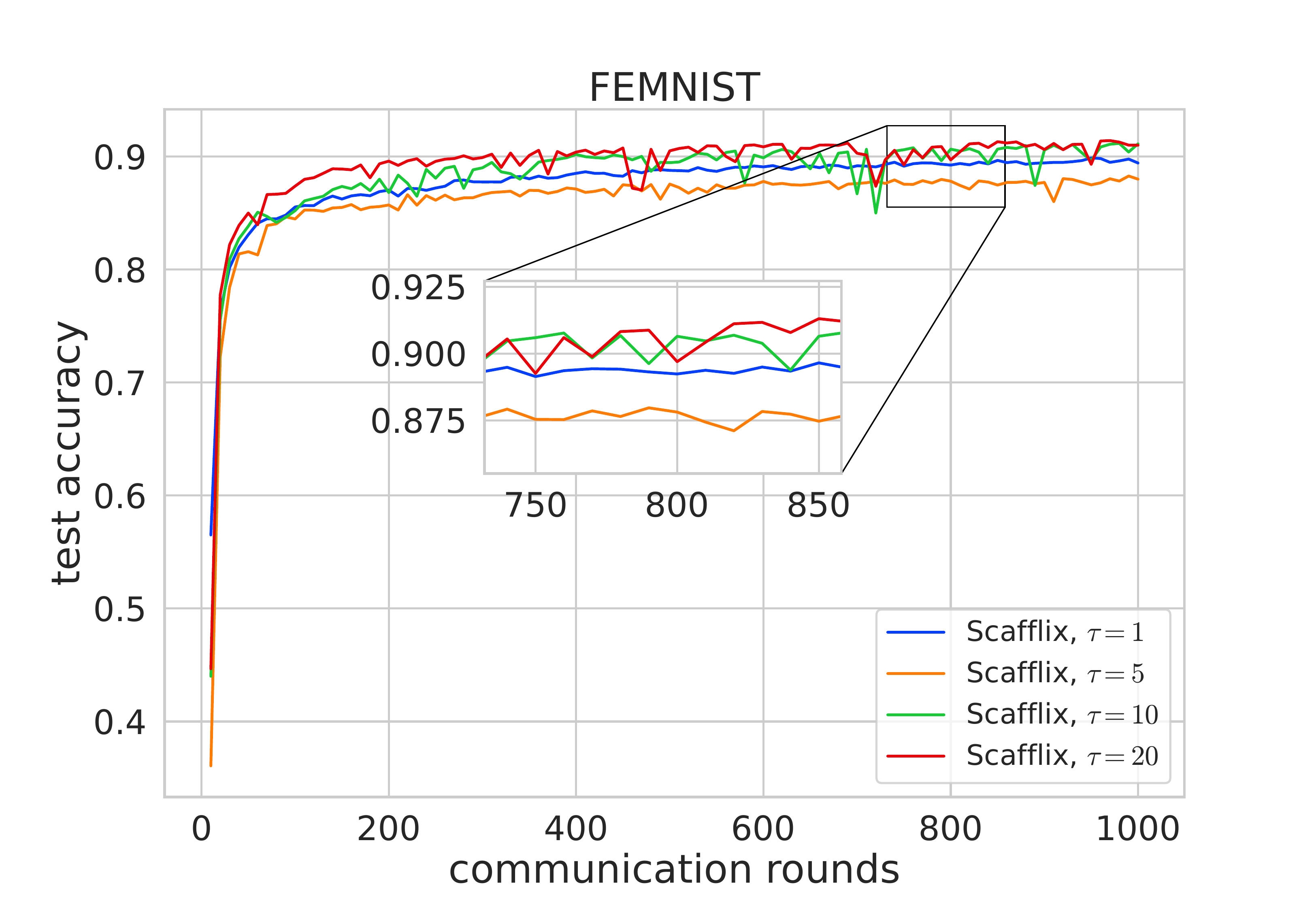}
		\caption{$\tau$s}\label{fig:abs07_b}
		\end{subfigure}
    \begin{subfigure}[b]{0.325\textwidth}
        \centering
        \includegraphics[trim=20 18 20 20, clip, width=\textwidth]{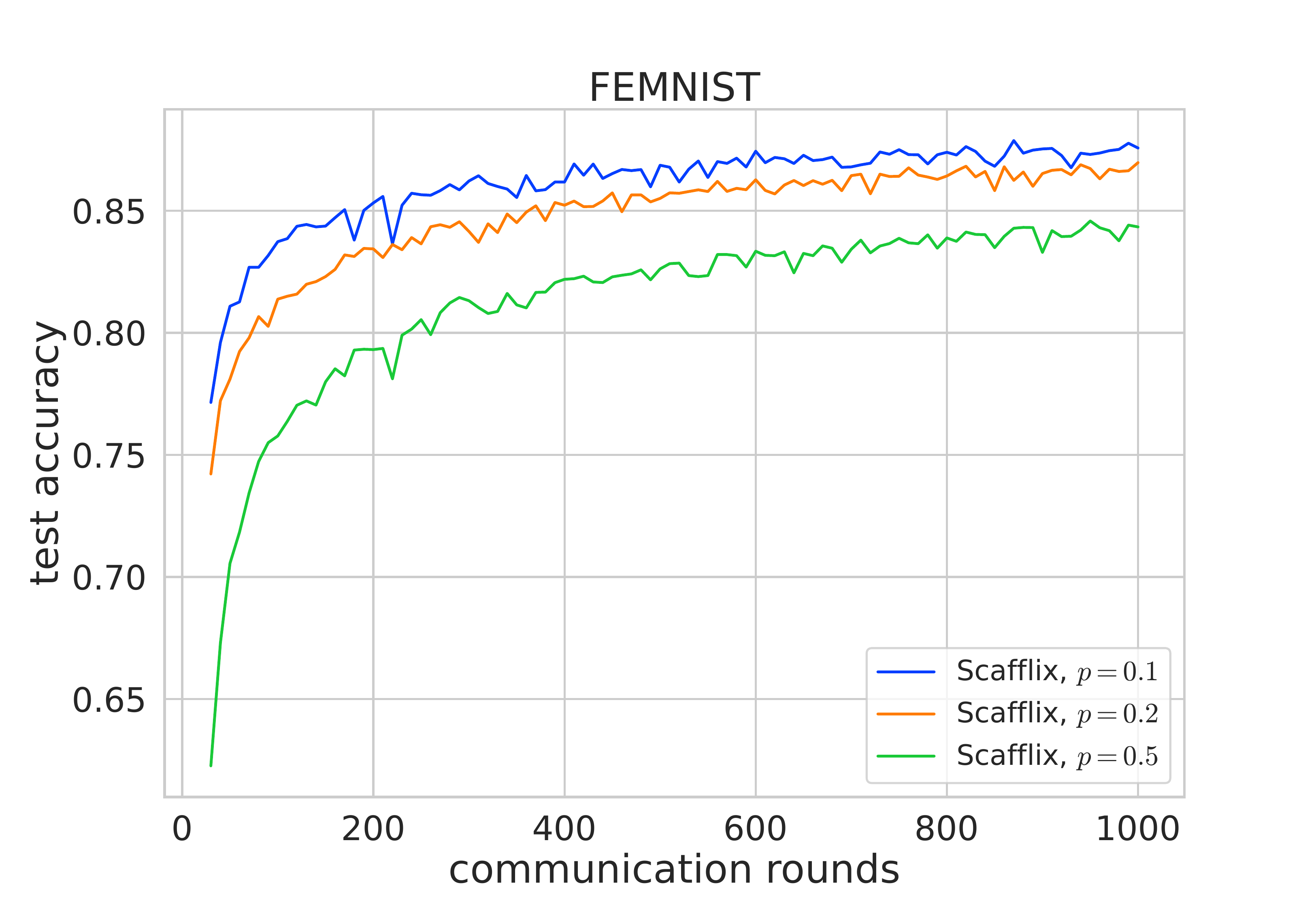}
        \caption{$p$s}\label{fig:abs07_c}
        \end{subfigure}
	\caption{Key ablation studies: (a) evaluate the influence of difference personalization factor $\alpha$, (b) examinate the effect of different numbers of clients participating to communication, (c) compare different values of the communication probability $p$.} 
	\label{fig:abs07}
\end{figure} 

\paragraph{Number of Clients Communicating per Round.}
In this ablation study, we examine the impact of varying the number of participating clients in each communication round within the \algname{Scafflix} framework. By default, we set this number to 10. Here, we conduct extensive experiments with different client numbers per round, choosing $\tau$ from  $\{1, 5, 10, 20\}$. The results are presented in Fig.~\ref{fig:abs07_b}.
We can observe that \algname{Scafflix} shows minimal sensitivity to changes in the batch size for local training. However, upon closer examination, we find that larger batch sizes, specifically $\tau=10$ and $20$, demonstrate slightly improved generalization performance.

\paragraph{Selection of Communication Probability $p$.}
In this ablation study, we explore the effects of varying the communication probability $p$ in \algname{Scafflix}.
We select $p$ from  $\{0.1, 0.2, 0.5\}$, and the corresponding results are shown in Fig.~\ref{fig:abs07_c}. 
We can clearly see that a smaller value of $p$, indicating reduced communication, facilitates faster convergence and superior generalization performance. This highlights the benefits of LT, which not only makes FL faster and more communication-efficient, but also improves the learning quality. 

\section{Conclusion}

In the contemporary era of artificial intelligence, improving federated learning to achieve faster convergence and reduce communication costs is crucial to enhance the quality of models trained on huge and heterogeneous datasets. To address this challenge, we introduced \algname{Scafflix}, a novel algorithm that achieves double communication acceleration by 
redesigning the objective to support explicit personalization for individual clients, while leveraging a state-of-the-art local training mechanism.
We provided complexity guarantees in the convex setting,  and also validated the effectiveness of 
our approach in the nonconvex setting through extensive 
experiments and ablation studies. We believe that our work is a significant contribution to the important topic of communication-efficient federated learning and offers 
valuable insights for further investigation in the future.

\bibliographystyle{abbrvnat}
	\bibliography{egbib}
	
\clearpage
%\onecolumn
\appendix

%\newpage
%\tableofcontents
%\newpage
\section{Proposed \algname{i-Scaffnew} algorithm}

We consider solving \eqref{eq:ERM} with the proposed \algname{i-Scaffnew} algorithm, shown as Algorithm~\ref{alg1} (applying  \algname{i-Scaffnew} to \eqref{eq:FLIX} yields \algname{Scafflix}, as we discuss subsequently in Section~\ref{secalg3}). 

\begin{algorithm}[t]
	\caption{\algname{i-Scaffnew} for \eqref{eq:ERM}}
	\label{alg1}
	\begin{algorithmic}[1]
		\STATE \textbf{input:}  stepsizes $\gamma_1>0,\ldots,\gamma_n>0$; probability $p \in (0,1]$; initial estimates $x_1^0,\ldots,x_n^0 \in \mathbb{R}^d$ and ${\red h_1^0, \ldots, h_n^0} \in \mathbb{R}^d$ such that $\sum_{i=1}^n {\red h_i^0}=0$.
		\STATE at the server, $\gamma \eqdef \left(\frac{1}{n}\sum_{i=1}^n \gamma_i^{-1}\right)^{-1}$ 
		\hfill $\diamond$ {\small\color{gray} $\gamma$ is used by the server for Step 9}
		\FOR{$t=0,1,\ldots$}
		\STATE flip a coin $\theta^t \eqdef \{1$ with probability $p$, 0 otherwise$\}$
		\FOR{$i=1,\ldots,n$, at clients in parallel,}
		\STATE compute an estimate $g_i^t$  of $\nabla f_i(x_i^t)$
\STATE $\hat{x}_i^t\eqdef x_i^t -\gamma_i \big(g_i^t - {\red h_i^t}\big)$
\hfill $\diamond$ {\small\color{gray} local SGD step}
\IF{$\theta^t=1$}
\STATE send $\frac{1}{\gamma_i}\hat{x}_i^t$ to the server, which aggregates $\bar{x}^t\eqdef \frac{\gamma}{n}\sum_{j=1}^n  \frac{1}{\gamma_i}\hat{x}_{j}^t $ and broadcasts it to all clients \hfill $\diamond$ {\small\color{gray} communication, but only with small probability $p$}
\STATE $x_i^{t+1}\eqdef \bar{x}^{t}$
\STATE ${\red h_i^{t+1}}\eqdef {\red h_i^t} + \frac{p}{\gamma_i}\big(\bar{x}^{t}-\hat{x}_i^t\big)$\hfill $\diamond$ {\small\color{gray}update of the local control variate $\red h_i^t$}
\ELSE
\STATE $x_i^{t+1}\eqdef \hat{x}_i^t$
\STATE ${\red h_i^{t+1}}\eqdef {\red h_i^t}$
\ENDIF
\ENDFOR
		\ENDFOR
	\end{algorithmic}
\end{algorithm}

\begin{theorem}[fast linear convergence]\label{theo1}
In \eqref{eq:ERM} and \algname{i-Scaffnew}, suppose that Assumptions~\ref{ass:convex_smooth}, \ref{ass:unbiasedness}, \ref{ass:expected_smoothness} hold and that for every $i\in[n]$, $0<\gamma_i \leq \frac{1}{ A_i}$.
For every $t\geq 0$, define the Lyapunov function
\begin{equation}
\Psi^{t}\eqdef  \sum_{i=1}^n \frac{1}{\gamma_i} \sqnorm{x_i^t-x^\star}+ \frac{1}{p^2}\sum_{i=1}^n \gamma_i \sqnorm{h_i^t-\nabla f_i(x^\star)}
.
\end{equation}
Then 
\algname{i-Scaffnew}
converges linearly:  for every $t\geq 0$, 
\begin{equation}
\Exp{\Psi^{t}}\leq (1-\zeta)^t \Psi^0 + \frac{1}{\zeta} \sum_{i=1}^n \gamma_i C_i,
\end{equation}
where 
\begin{equation}
\zeta = \min\left(\min_{i\in[n]} \gamma_i\mu_i,p^2\right).\label{eqrate2j}
\end{equation}
\end{theorem}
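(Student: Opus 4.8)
The plan is to analyze one iteration of \algname{i-Scaffnew} and show that $\Exp{\Psi^{t+1}\mid\mathcal F^t}\le(1-\zeta)\Psi^t+\sum_i\gamma_iC_i$, after which \eqref{eqrate2j} follows by unrolling the recursion and summing the geometric series. The crucial structural observation, inherited from the analysis of \algname{Scaffnew}, is that the two blocks of the Lyapunov function should be handled together: the local SGD step $\hat x_i^t = x_i^t-\gamma_i(g_i^t-h_i^t)$ moves the iterate, and the control variate $h_i^t$ tracks $\nabla f_i(x^\star)$ precisely so that at optimality $\hat x_i^t=x^\star$ is a fixed point (using $\sum_i h_i^0=0$, which is preserved by the updates, so that the server average is consistent). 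First I would record the fixed-point identities: $x^\star = \bar x^\star$ where $\bar x^\star \eqdef \frac{\gamma}{n}\sum_i \frac1{\gamma_i}x^\star$, and that $h_i^t-\nabla f_i(x^\star)$ is the natural ``dual'' error quantity.

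Next I would split into the two events. Conditioned on $\theta^t=0$ (probability $1-p$): $x_i^{t+1}=\hat x_i^t$ and $h_i^{t+1}=h_i^t$, so the dual part is unchanged and I only need to bound $\sum_i\frac1{\gamma_i}\Exp{\sqnorm{\hat x_i^t-x^\star}}$. Expanding the square, $\hat x_i^t-x^\star = (x_i^t-x^\star)-\gamma_i(g_i^t-h_i^t)$; taking conditional expectation and using unbiasedness (Assumption~\ref{ass:unbiasedness}) replaces $g_i^t$ by $\nabla f_i(x_i^t)$ in the cross term, and the stochastic noise in the quadratic term is controlled by expected smoothness (Assumption~\ref{ass:expected_smoothness}), producing $2A_iD_{f_i}(x_i^t,x^\star)+C_i$. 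Then the condition $\gamma_i\le 1/A_i$ lets me absorb the $2A_i\gamma_i^2 D_{f_i}$ term against a $-2\gamma_i D_{f_i}$ term coming from the cross term $-2\gamma_i\langle \nabla f_i(x_i^t)-\nabla f_i(x^\star), x_i^t-x^\star\rangle$, and $\mu_i$-strong convexity of $f_i$ (Assumption~\ref{ass:convex_smooth}) gives the contraction factor: $\langle \nabla f_i(x_i^t)-\nabla f_i(x^\star),x_i^t-x^\star\rangle \ge \mu_i\sqnorm{x_i^t-x^\star}$ (or, more carefully, I would keep a convex combination of this and $2D_{f_i}$ to also retain a Bregman term for later). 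This yields a per-client contraction by roughly $(1-\gamma_i\mu_i)$ on the primal block, while the $-h_i^t$ shift contributes cross terms that I keep for the other event.

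On the event $\theta^t=1$ (probability $p$): $x_i^{t+1}=\bar x^t$ for all $i$ and $h_i^{t+1}=h_i^t+\frac{p}{\gamma_i}(\bar x^t-\hat x_i^t)$. Here the key algebraic identity is that the server aggregation $\bar x^t=\frac{\gamma}{n}\sum_j\frac1{\gamma_j}\hat x_j^t$ is exactly the projection, in the $\frac1{\gamma_i}$-weighted inner product, of $(\hat x_i^t)_i$ onto the ``consensus'' subspace, so $\sum_i\frac1{\gamma_i}\sqnorm{\bar x^t-x^\star}\le \sum_i\frac1{\gamma_i}\sqnorm{\hat x_i^t-x^\star}$ and the orthogonal decomposition $\sum_i\frac1{\gamma_i}\sqnorm{\hat x_i^t-x^\star} = \sum_i\frac1{\gamma_i}\sqnorm{\bar x^t-x^\star}+\sum_i\frac1{\gamma_i}\sqnorm{\hat x_i^t-\bar x^t}$ holds. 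The dual update then reads $h_i^{t+1}-\nabla f_i(x^\star) = (h_i^t-\nabla f_i(x^\star)) - \frac{p}{\gamma_i}(\hat x_i^t-\bar x^t)$, and expanding $\frac1{p^2}\sum_i\gamma_i\sqnorm{h_i^{t+1}-\nabla f_i(x^\star)}$ produces: the old dual term, a $+\frac1{\gamma_i}\sqnorm{\hat x_i^t-\bar x^t}$ term that cancels the leftover consensus term above, and a cross term $-\frac{2}{p}\langle h_i^t-\nabla f_i(x^\star),\hat x_i^t-\bar x^t\rangle$ that must be matched with leftover cross terms from the primal expansion involving $h_i^t$. Taking the $p$-weighted combination of the two events and checking that all cross terms telescope/cancel is the bulk of the computation; the factor $\frac{1}{p^2}$ in front of the dual block and the factor $p$ in the $h$-update are tuned precisely to make the dual block contract by $(1-p^2)$.

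The main obstacle I anticipate is the bookkeeping of the cross terms linking the primal and dual blocks across the two events — in particular verifying that the weights $\frac1{\gamma_i}$, $\gamma_i$, $\frac1{p^2}$, and the $\frac{p}{\gamma_i}$ in the control-variate update are exactly the right ones so that, after taking expectation over $\theta^t$, every mixed term either cancels or is dominated, leaving a clean $(1-\zeta)\Psi^t$ with $\zeta=\min(\min_i\gamma_i\mu_i,p^2)$. Relative to the original \algname{Scaffnew} proof, the new difficulty is that the per-client stepsizes $\gamma_i$ force the analysis into a non-standard weighted inner product on $\mathbb R^{d}$ (equivalently, a block-weighted inner product on $\mathbb R^{nd}$), so the projection/orthogonality argument for the aggregation step must be re-derived with these weights rather than quoted; once that is set up, strong convexity and expected smoothness enter exactly as above. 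Finally, unrolling $\Exp{\Psi^{t+1}}\le(1-\zeta)\Exp{\Psi^t}+\sum_i\gamma_iC_i$ and bounding $\sum_{k=0}^{t-1}(1-\zeta)^k\le 1/\zeta$ gives the stated bound.
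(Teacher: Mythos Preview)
Your proposal is correct and follows essentially the same approach as the paper's proof: a weighted inner product $\langle\mathbf{x},\mathbf{y}\rangle_{\boldsymbol{\gamma}}=\sum_i\frac{1}{\gamma_i}\langle x_i,y_i\rangle$ on the product space, the server aggregation realized as the orthogonal projection onto the consensus subspace in that geometry, exact cancellation of the primal--dual cross terms after averaging over $\theta^t$, and a per-client bound on the SGD step via the Bregman identity $\langle x-y,\nabla f_i(x)-\nabla f_i(y)\rangle=D_{f_i}(x,y)+D_{f_i}(y,x)$ combined with Assumption~\ref{ass:expected_smoothness} and $\gamma_i\le 1/A_i$. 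The paper organizes the calculation slightly differently---it introduces $w_i^t\eqdef x_i^t-\gamma_i g_i^t$ and $w_i^\star\eqdef x^\star-\gamma_i\nabla f_i(x^\star)$ so that $\hat x_i^t-x^\star=(w_i^t-w_i^\star)+\gamma_i(h_i^t-h_i^\star)$, and then proves the \emph{exact} identity $\Exp{\|\mathbf{x}^{t+1}-\mathbf{x}^\star\|_{\boldsymbol{\gamma}}^2\mid\mathcal F^t}+\frac{1}{p^2}\Exp{\|\mathbf{h}^{t+1}-\mathbf{h}^\star\|_{\boldsymbol{\gamma}}^2\mid\mathcal F^t}=\|\mathbf{w}^t-\mathbf{w}^\star\|_{\boldsymbol{\gamma}}^2+\frac{1-p^2}{p^2}\|\mathbf{h}^t-\mathbf{h}^\star\|_{\boldsymbol{\gamma}}^2$ \emph{before} touching the stochastic gradient---but this is only a packaging difference from your event-by-event bookkeeping, and the ingredients and final recursion are the same.
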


\begin{proof}
To simplify the analysis of \algname{i-Scaffnew}, we introduce vector notations: the problem \eqref{eq:ERM} can be written as
\begin{equation}
\mathrm{find}\ \mathbf{x}^\star =\argmin_{\mathbf{x}\in\mathcal{X}}\  \mathbf{f}(\mathbf{x})\quad\mbox{s.t.}\quad W\mathbf{x}=0,\label{eqpro2}
\end{equation}
where $\mathcal{X}\eqdef\mathbb{R}^{d\times n}$, an element 
$\mathbf{x}=(x_i)_{i=1}^n \in \mathcal{X}$ is a collection of vectors $x_i\in \mathbb{R}^d$, $\mathbf{f}:\mathbf{x}\in \mathcal{X}\mapsto \sum_{i=1}^n f_i(x_i)$, the linear operator $W:\mathcal{X}\rightarrow \mathcal{X}$ maps $\mathbf{x}=(x_i)_{i=1}^n $ to $(x_i-\frac{1}{n}\sum_{j=1}^n \frac{\gamma}{\gamma_j}x_j)_{i=1}^n$, for given values $\gamma_1>0,\ldots,\gamma_n>0$ and their harmonic mean $\gamma = \left(\frac{1}{n}\sum_{i=1}^n \gamma_i^{-1}\right)^{-1}$.
The constraint $W\mathbf{x}=0$ means that $\mathbf{x}$ minus its weighted average is zero; that is, $\mathbf{x}$ has identical components $x_1 = \cdots = x_n$. Thus, \eqref{eqpro2} is indeed equivalent to \eqref{eq:ERM}. $\mathbf{x}^\star\eqdef (x^\star)_{i=1}^n \in \mathcal{X}$ is the unique solution to  \eqref{eqpro2}, where $x^\star$ is the unique solution to \eqref{eq:ERM}. 

Moreover, we introduce the weighted inner product in $\mathcal{X}$: $(\mathbf{x},\mathbf{y})\mapsto \langle \mathbf{x},\mathbf{y}\rangle_{\boldsymbol{\gamma}}\eqdef \sum_{i=1}^n \frac{1}{\gamma_i} \langle x_i,y_i\rangle$. Then, the orthogonal projector $P$ onto the hyperspace $\{\mathbf{y} \in \mathcal{X}\ :\ y_1=\cdots=y_n\}$, with respect to this weighted inner product, is  $P:\mathbf{x}\in\mathcal{X} \mapsto \mathbf{\bar{x}}=(\bar{x})_{i=1}^n$ with $\bar{x}=\frac{\gamma}{n}\sum_{i=1}^n \frac{1}{\gamma_i}  x_i$ (because $\bar{x}$ minimizes $\sqnorm{\mathbf{\bar{x}}-\mathbf{x}}_{\boldsymbol{\gamma}}$, so that $\frac{1}{n}\sum_{i=1}^n \frac{1}{\gamma_i}(\bar{x} - x_i) = 0$). Thus, $P$, as well as $W=\mathrm{Id}-P$, where $\mathrm{Id}$ denotes the identity, are self-adjoint and positive linear operators with respect to the weighted inner product. Moreover, for every $\mathbf{x}\in \mathcal{X}$,
\begin{equation*}
\sqnorm{\mathbf{x}}_{\boldsymbol{\gamma}}=\sqnorm{P\mathbf{x}}_{\boldsymbol{\gamma}}+\sqnorm{W\mathbf{x}}_{\boldsymbol{\gamma}}
=\sqnorm{\mathbf{\bar{x}}}_{\boldsymbol{\gamma}}+\sqnorm{W\mathbf{x}}_{\boldsymbol{\gamma}}
=\frac{n}{\gamma} \sqnorm{\bar{x}}+\sqnorm{W\mathbf{x}}_{\boldsymbol{\gamma}},
\end{equation*}
where $\mathbf{\bar{x}}=(\bar{x})_{i=1}^n$ and $\bar{x}=\frac{\gamma}{n}\sum_{i=1}^n \frac{1}{\gamma_i}  x_i$.

Let us introduce  further vector notations for the variables of \algname{i-Scaffnew}: for every $t\geq 0$, we define the \emph{scaled} concatenated control variate $\mathbf{h}^t\eqdef (\gamma_i h_i^t)_{i=1}^n$, $\mathbf{h}^\star\eqdef (\gamma_i h_i^\star)_{i=1}^n$, with $h_i^\star \eqdef \nabla f_i(x^\star)$, $\mathbf{\bar{x}}^t\eqdef (\bar{x}^t)_{i=1}^n$,  
$\mathbf{w}^t\eqdef (w_i^t)_{i=1}^n$, with $w_i^t\eqdef x_i^t-\gamma_i g_i^t$, $\mathbf{w}^\star\eqdef (w_i^\star)_{i=1}^n$, with $w_i^\star\eqdef x_i^\star-\gamma_i \nabla f_i(x_i^\star)$,
$\mathbf{\hat{h}}^{t}\eqdef  \mathbf{h}^t - p W  \mathbf{\hat{x}}^{t}$.
Finally, 
we denote by $\mathcal{F}_0^t$ the $\sigma$-algebra generated by the collection of $\mathcal{X}$-valued random variables $\mathbf{x}^0,\mathbf{h}^0,\ldots, \mathbf{x}^t,\mathbf{h}^t$ 
and by $\mathcal{F}^t$ the $\sigma$-algebra generated by these variables, as well as the stochastic gradients $g_i^t$. 

We can then rewrite the iteration of \algname{i-Scaffnew} as:
	\noindent	\begin{algorithmic}
			\STATE $\mathbf{\hat{x}}^{t} \eqdef  \mathbf{w}^t +  \mathbf{h}^t$
			\IF{$\theta^t=1$}
			\STATE $\mathbf{x}^{t+1}\eqdef \mathbf{\bar{x}}^{t}$
			\STATE $\mathbf{h}^{t+1}\eqdef \mathbf{h}^t -pW\mathbf{\hat{x}}^{t}$
			\ELSE
			\STATE $\mathbf{x}^{t+1}\eqdef \mathbf{\hat{x}}^{t}$
			\STATE $\mathbf{h}^{t+1}\eqdef \mathbf{h}^t$
			\ENDIF
		\end{algorithmic}\medskip

We suppose that $\sum_{i=1}^n h^0_i = 0$. Then, it follows from the definition of $\bar{x}^t$ that $\frac{\gamma}{n}\sum_{j=1}^n  \frac{1}{\gamma_i}(\bar{x}^t-\hat{x}_{j}^t) = 0$, so that 
 for every $t\geq 0$, 
$\sum_{i=1}^n  h^t_i = 0$; that is, $W \mathbf{h}^t=\mathbf{h}^t$.\medskip

Let $t\geq 0$. We have %, 
\begin{align*}
\Exp{\sqnorm{\mathbf{x}^{t+1}-\mathbf{x}^\star}_{\boldsymbol{\gamma}}\;|\;\mathcal{F}^t}&=
p \sqnorm{\mathbf{\bar{x}}^{t}-\mathbf{x}^\star}_{\boldsymbol{\gamma}}+(1-p)\sqnorm{\mathbf{\hat{x}}^{t}-\mathbf{x}^\star}_{\boldsymbol{\gamma}},
\end{align*}
with 
\begin{equation*}
\sqnorm{\mathbf{\bar{x}}^{t}-\mathbf{x}^\star}_{\boldsymbol{\gamma}}=\sqnorm{\mathbf{\hat{x}}^{t}-\mathbf{x}^\star}_{\boldsymbol{\gamma}}-\sqnorm{W  \mathbf{\hat{x}}^{t}}_{\boldsymbol{\gamma}}.
\end{equation*}
Moreover,
\begin{align*}
\sqnorm{ \mathbf{\hat{x}}^{t}-\mathbf{x}^\star}_{\boldsymbol{\gamma}}
&= \sqnorm{\mathbf{w}^t-\mathbf{w}^\star}_{\boldsymbol{\gamma}}+\sqnorm{ \mathbf{h}^t- \mathbf{h}^\star}_{\boldsymbol{\gamma}}
 +2\langle \mathbf{w}^t-\mathbf{w}^\star, \mathbf{h}^t- \mathbf{h}^\star\rangle_{\boldsymbol{\gamma}}
\\
&= \sqnorm{\mathbf{w}^t-\mathbf{w}^\star}_{\boldsymbol{\gamma}}-\sqnorm{ \mathbf{h}^t- \mathbf{h}^\star}_{\boldsymbol{\gamma}}
 +2\langle\mathbf{\hat{x}}^{t}-\mathbf{x}^\star, \mathbf{h}^t- \mathbf{h}^\star\rangle_{\boldsymbol{\gamma}}\\
 &= \sqnorm{\mathbf{w}^t-\mathbf{w}^\star}_{\boldsymbol{\gamma}}-\sqnorm{ \mathbf{h}^t- \mathbf{h}^\star}_{\boldsymbol{\gamma}}
 +2\langle\mathbf{\hat{x}}^{t}-\mathbf{x}^\star, \mathbf{\hat{h}}^{t}- \mathbf{h}^\star\rangle_{\boldsymbol{\gamma}}
  -2\langle\mathbf{\hat{x}}^{t}-\mathbf{x}^\star, \mathbf{\hat{h}}^{t}- \mathbf{h}^t\rangle_{\boldsymbol{\gamma}}\\
   &= \sqnorm{\mathbf{w}^t-\mathbf{w}^\star}_{\boldsymbol{\gamma}}-\sqnorm{ \mathbf{h}^t- \mathbf{h}^\star}_{\boldsymbol{\gamma}}
 +2\langle\mathbf{\hat{x}}^{t}-\mathbf{x}^\star, \mathbf{\hat{h}}^{t}- \mathbf{h}^\star\rangle_{\boldsymbol{\gamma}}
  +2p \langle\mathbf{\hat{x}}^{t}-\mathbf{x}^\star, W\mathbf{\hat{x}}^t\rangle_{\boldsymbol{\gamma}}\\
    &= \sqnorm{\mathbf{w}^t-\mathbf{w}^\star}_{\boldsymbol{\gamma}}-\sqnorm{ \mathbf{h}^t- \mathbf{h}^\star}_{\boldsymbol{\gamma}}
 +2\langle\mathbf{\hat{x}}^{t}-\mathbf{x}^\star, \mathbf{\hat{h}}^{t}- \mathbf{h}^\star\rangle_{\boldsymbol{\gamma}}
  +2p \sqnorm{W\mathbf{\hat{x}}^t}_{\boldsymbol{\gamma}}.
 \end{align*}
 Hence,
\begin{align*}
\Exp{\sqnorm{\mathbf{x}^{t+1}-\mathbf{x}^\star}_{\boldsymbol{\gamma}}\;|\;\mathcal{F}^t}&=\sqnorm{\mathbf{\hat{x}}^{t}-\mathbf{x}^\star}_{\boldsymbol{\gamma}}-p\sqnorm{W  \mathbf{\hat{x}}^{t}}_{\boldsymbol{\gamma}}\\
&=\sqnorm{\mathbf{w}^t-\mathbf{w}^\star}_{\boldsymbol{\gamma}}-\sqnorm{ \mathbf{h}^t- \mathbf{h}^\star}_{\boldsymbol{\gamma}}
 +2\langle\mathbf{\hat{x}}^{t}-\mathbf{x}^\star, \mathbf{\hat{h}}^{t}- \mathbf{h}^\star\rangle_{\boldsymbol{\gamma}}+p\sqnorm{W\mathbf{\hat{x}}^t}_{\boldsymbol{\gamma}}.
\end{align*}

On the other hand, we have
\begin{align*}
\Exp{\sqnorm{\mathbf{h}^{t+1}- \mathbf{h}^\star}_{\boldsymbol{\gamma}}\;|\;\mathcal{F}^t}
&=p\sqnorm{\mathbf{\hat{h}}^{t} - \mathbf{h}^\star}_{\boldsymbol{\gamma}}+(1-p)\sqnorm{\mathbf{h}^t - \mathbf{h}^\star}_{\boldsymbol{\gamma}}
\end{align*}
and
\begin{align*}
\sqnorm{\mathbf{\hat{h}}^{t}- \mathbf{h}^\star}_{\boldsymbol{\gamma}}&=\sqnorm{( \mathbf{h}^t- \mathbf{h}^\star)+(\mathbf{\hat{h}}^{t}- \mathbf{h}^t)}_{\boldsymbol{\gamma}}\\
&=\sqnorm{ \mathbf{h}^t- \mathbf{h}^\star}_{\boldsymbol{\gamma}}+\sqnorm{\mathbf{\hat{h}}^{t}- \mathbf{h}^t}_{\boldsymbol{\gamma}}+2\langle  \mathbf{h}^t- \mathbf{h}^\star,\mathbf{\hat{h}}^{t}- \mathbf{h}^t\rangle_{\boldsymbol{\gamma}}\\
&=\sqnorm{ \mathbf{h}^t- \mathbf{h}^\star}_{\boldsymbol{\gamma}}- \sqnorm{\mathbf{\hat{h}}^{t}- \mathbf{h}^t}_{\boldsymbol{\gamma}}+2 \langle \mathbf{\hat{h}}^{t}- \mathbf{h}^\star,\mathbf{\hat{h}}^{t}- \mathbf{h}^t\rangle_{\boldsymbol{\gamma}} \\
&=\sqnorm{ \mathbf{h}^t- \mathbf{h}^\star}_{\boldsymbol{\gamma}} - \sqnorm{\mathbf{\hat{h}}^{t}- \mathbf{h}^t}_{\boldsymbol{\gamma}}-2 p\langle \mathbf{\hat{h}}^{t}- \mathbf{h}^\star, W( \mathbf{\hat{x}}^{t}-\mathbf{x}^\star)\rangle_{\boldsymbol{\gamma}}\\
&=\sqnorm{ \mathbf{h}^t- \mathbf{h}^\star}_{\boldsymbol{\gamma}} - p^2\sqnorm{W\mathbf{\hat{x}}^t}_{\boldsymbol{\gamma}} -2p\langle W(\mathbf{\hat{h}}^{t}- \mathbf{h}^\star),  \mathbf{\hat{x}}^{t}-\mathbf{x}^\star\rangle_{\boldsymbol{\gamma}}\\
&=\sqnorm{ \mathbf{h}^t- \mathbf{h}^\star}_{\boldsymbol{\gamma}} - p^2\sqnorm{W\mathbf{\hat{x}}^t}_{\boldsymbol{\gamma}} -2 p\langle \mathbf{\hat{h}}^{t}- \mathbf{h}^\star,  \mathbf{\hat{x}}^{t}-\mathbf{x}^\star\rangle_{\boldsymbol{\gamma}}.
\end{align*}

Hence, 
\begin{align}
\Exp{\sqnorm{\mathbf{x}^{t+1}-\mathbf{x}^\star}_{\boldsymbol{\gamma}}\;|\;\mathcal{F}^t}&+\frac{1}{p^2}\Exp{\sqnorm{\mathbf{h}^{t+1}- \mathbf{h}^\star}_{\boldsymbol{\gamma}}\;|\;\mathcal{F}^t}\notag\\
&=\sqnorm{\mathbf{w}^t-\mathbf{w}^\star}_{\boldsymbol{\gamma}}-\sqnorm{ \mathbf{h}^t- \mathbf{h}^\star}_{\boldsymbol{\gamma}}
 +2\langle\mathbf{\hat{x}}^{t}-\mathbf{x}^\star, \mathbf{\hat{h}}^{t}- \mathbf{h}^\star\rangle_{\boldsymbol{\gamma}}+p\sqnorm{W\mathbf{\hat{x}}^t}_{\boldsymbol{\gamma}}\notag\\
&\quad+\frac{1}{p^2}\sqnorm{ \mathbf{h}^t- \mathbf{h}^\star}_{\boldsymbol{\gamma}}- p\sqnorm{W\mathbf{\hat{x}}^t}_{\boldsymbol{\gamma}} -2 \langle \mathbf{\hat{h}}^{t}- \mathbf{h}^\star,  \mathbf{\hat{x}}^{t}-\mathbf{x}^\star\rangle_{\boldsymbol{\gamma}} \notag\\
&=  \sqnorm{\mathbf{w}^t-\mathbf{w}^\star}_{\boldsymbol{\gamma}}+\frac{1}{p^2}\left(1-p^2\right)\sqnorm{ \mathbf{h}^t- \mathbf{h}^\star}_{\boldsymbol{\gamma}}.\label{eq55}
\end{align}

Moreover, for every $i\in[n]$,
\begin{align*}
\sqnorm{w_i^t - w_i^\star}&=\sqnorm{x_i^t - x^\star -\gamma_i \big(g_i^t -\nabla f_i(x^\star)\big)}\\
&=\sqnorm{x_i^t - x^\star } - 2\gamma_i \langle x_i^t - x^\star, g_i^t -\nabla f_i(x^\star)\rangle + \gamma_i^2 \sqnorm{g_i^t -\nabla f_i(x^\star)},
\end{align*}
and, by unbiasedness of $g_i^t$ and Assumption~\ref{ass:expected_smoothness_n},
\begin{align*}
\Exp{\sqnorm{w_i^t - w_i^\star}\;|\;\mathcal{F}_0^t}&=\sqnorm{x_i^t - x^\star } - 2\gamma_i \langle x_i^t - x^\star, \nabla f_i(x_i^t) -\nabla f_i(x^\star)\rangle \\
&\quad + \gamma_i^2\Exp{ \sqnorm{g_i^t -\nabla f_i(x^\star)}\;|\;\mathcal{F}^t}\\
&\leq \sqnorm{x_i^t - x^\star } - 2\gamma_i \langle x_i^t - x^\star, \nabla f_i(x_i^t) -\nabla f_i(x^\star)\rangle + 2\gamma_i^2 A_i D_{f_i}(x_i^t, \xstar) \\
&\quad+ \gamma_i^2  C_i.
\end{align*}
It is easy to see that $\langle x_i^t - x^\star, \nabla f_i(x_i^t) -\nabla f_i(x^\star)\rangle = D_{f_i}(x_i^t, \xstar) + D_{f_i}(\xstar,x_i^t)$. This yields \begin{align*}
\Exp{\sqnorm{w_i^t - w_i^\star}\;|\;\mathcal{F}_0^t}
&\leq \sqnorm{x_i^t - x^\star } -2\gamma_i D_{f_i}(\xstar,x_i^t) - 2\gamma_i D_{f_i}(x_i^t, \xstar) + 2\gamma_i^2  A_i D_{f_i}(x_i^t, \xstar) \\
&\quad+ \gamma_i^2  C_i.
\end{align*}
In addition, the strong convexity of $f_i$ implies that $D_{f_i}(\xstar,x_i^t)\geq \frac{\mu_i}{2}\sqnorm{x_i^t - x^\star }$, so that 
\begin{align*}
\Exp{\sqnorm{w_i^t - w_i^\star}\;|\;\mathcal{F}_0^t}
&\leq (1-\gamma_i\mu_i)\sqnorm{x_i^t - x^\star }  - 2\gamma_i (1-\gamma_i A_i) D_{f_i}(x_i^t, \xstar) + \gamma_i^2  C_i,
\end{align*}
and since we have supposed $\gamma_i\leq \frac{1}{A_i}$,
\begin{align*}
\Exp{\sqnorm{w_i^t - w_i^\star}\;|\;\mathcal{F}_0^t}
&\leq (1-\gamma_i\mu_i)\sqnorm{x_i^t - x^\star } + \gamma_i^2  C_i.
\end{align*}
Therefore,
\begin{equation*}
\Exp{\sqnorm{\mathbf{w}^t-\mathbf{w}^\star}_{\boldsymbol{\gamma}}\;|\;\mathcal{F}_0^t}\leq \max_{i\in[n]}(1-\gamma_i\mu_i)\sqnorm{\mathbf{x}^t-\mathbf{x}^\star}_{\boldsymbol{\gamma}}+\sum_{i=1}^n \gamma_i C_i
\end{equation*}
and
\begin{align}
\Exp{\Psi^{t+1}\;|\;\mathcal{F}_0^t}&=\Exp{\sqnorm{\mathbf{x}^{t+1}-\mathbf{x}^\star}_{\boldsymbol{\gamma}}\;|\;\mathcal{F}^t_0}+\frac{1}{p^2}\Exp{\sqnorm{\mathbf{h}^{t+1}- \mathbf{h}^\star}_{\boldsymbol{\gamma}}\;|\;\mathcal{F}^t_0}\notag\\
&\leq  \max_{i\in[n]}(1-\gamma_i\mu_i)\sqnorm{\mathbf{x}^t-\mathbf{x}^\star}_{\boldsymbol{\gamma}}+\frac{1}{p^2}\left(1-p^2\right)\sqnorm{ \mathbf{h}^t- \mathbf{h}^\star}_{\boldsymbol{\gamma}}+\sum_{i=1}^n \gamma_i C_i\notag\\
&\leq  (1-\zeta)\left(\sqnorm{\mathbf{x}^t-\mathbf{x}^\star}_{\boldsymbol{\gamma}}+\frac{1}{p^2}\sqnorm{ \mathbf{h}^t- \mathbf{h}^\star}_{\boldsymbol{\gamma}}\right)+\sum_{i=1}^n \gamma_i C_i\notag\\
&=(1-\zeta) \Psi^t +\sum_{i=1}^n \gamma_i C_i,\label{eqrec2b}
\end{align}
where
\begin{equation*}
\zeta = \min\left(\min_{i\in[n]} \gamma_i\mu_i,p^2\right).
\end{equation*}
Using the tower rule, we can unroll the recursion in \eqref{eqrec2b} to obtain the unconditional expectation of $\Psi^{t+1}$. 
\end{proof}

\section{From \algname{i-Scaffnew} to \algname{Scafflix}}\label{secalg3}

We suppose that Assumptions~\ref{ass:convex_smooth}, \ref{ass:unbiasedness}, \ref{ass:expected_smoothness} hold. 
We define for every $i\in[n]$
the function
$\tilde{f}_i:x\in\mathbb{R}^d\mapsto f_i\big(\alpha_i x+ (1 - \alpha_i)\xstar_i\big)$. Thus, \eqref{eq:FLIX} takes the form of  \eqref{eq:ERM} with $f_i$ replaced by $\tilde{f}_i$. 

We want to derive \algname{Scafflix} from \algname{i-Scaffnew} applied to \eqref{eq:ERM} with $f_i$ replaced by $\tilde{f}_i$. For this, we first observe that for every $i\in[n]$, $\tilde{f}_i$ is $\alpha_i^2 L_i$-smooth and $\alpha_i^2 \mu_i$-strongly convex. This follows easily from the fact that 
$\nabla \tilde{f}_i (x) = \alpha_i \nabla f_i\big(\alpha_i x+ (1 - \alpha_i)\xstar_i\big)$.

Second, for every $t\geq 0$ and $i\in[n]$, $g_i^t$ is an unbiased estimate of $\nabla f_i(\tilde{x}_i^t)=\alpha_i^{-1}\nabla \tilde{f}_i(x_i^t)$. Therefore, $\alpha_i g_i^t$ is an unbiased estimate of $\nabla \tilde{f}_i(x_i^t)$ satisfying
    \begin{equation*}
        \Exp{\sqn{\alpha_i g_i^{t}- \nabla \tilde{f}_i(x^\star)}\;|\; x_i^{t}} 
        = \alpha_i^2\Exp{\sqn{g_i^{t}- \nabla f_i(\tilde{x}_i^\star)}\;|\; \tilde{x}_i^{t}} 
        \leq 2\alpha_i^2 A_i D_{f_i}(\tilde{x}_i^{t}, \tilde{x}_i^\star) + \alpha_i^2 C_i.
    \end{equation*}
Moreover, 
        \begin{align*}
D_{f_i}(\tilde{x}_i^{t}, \tilde{x}_i^\star) &= f_i(\tilde{x}_i^t)-f_i(\tilde{x}_i^\star)-\langle \nabla f_i(\tilde{x}_i^\star),\tilde{x}_i^t-\tilde{x}_i^\star\rangle
\\
&= \tilde{f}_i(x_i^t) - \tilde{f}_i(x^\star) -\langle \alpha_i^{-1}\nabla \tilde{f}_i(x^\star),\alpha_i(x_i^t-x^\star)\rangle\\
&= \tilde{f}_i(x_i^t) - \tilde{f}_i(x^\star) -\langle \nabla \tilde{f}_i(x^\star),x_i^t-x^\star\rangle\\
&=D_{\tilde{f}_i}(x_i^{t}, x^\star).
    \end{align*}

Thus, we obtain \algname{Scafflix} by applying \algname{i-Scaffnew} to solve \eqref{eq:FLIX}, viewed as  \eqref{eq:ERM} with $f_i$ replaced by $\tilde{f}_i$, and further making the following substitutions in the algorithm: $g_i^t$ is replaced by $\alpha_i g_i^t$, $h_i^t$ is replaced by $\alpha_i h_i^t$ (so that $h_i^t$ in \algname{Scafflix} converges to  $\nabla f_i (\tilde{x}_i^\star)$ instead of $\nabla \tilde{f}_i (x^\star)=\alpha_i\nabla f_i (\tilde{x}_i^\star)$), $\gamma_i$ is replaced by $\alpha_i^{-2}\gamma_i$ (so that the $\alpha_i$ disappear in the theorem).

Accordingly, Theorem~\ref{theo2} follows from Theorem~\ref{theo1}, with the same substitutions and with $A_i$, $C_i$ and $\mu_i$ replaced by  $\alpha_i^2 A_i$, $\alpha_i^2 C_i$ and $\alpha_i^2\mu_i$, respectively. Finally, the Lyapunov function is multiplied by $\gamma_{\min}/n$ to make it independent from $\epsilon$ when scaling the $\gamma_i$ by $\epsilon$ in Corollary~\ref{cor2}.\medskip

We note that 
\algname{i-Scaffnew} is recovered as a particular case of \algname{Scafflix} if $\alpha_i \equiv 1$, so that \algname{Scafflix} is indeed more general.

\section{Proof of Corollary~\ref{cor2}}

We place ourselves in the conditions of Theorem~\ref{theo2}. 
Let $\epsilon>0$. We want to choose the $\gamma_i$ and the number of iterations $T\geq 0$ such that $\Exp{\Psi^T}\leq \epsilon$. For this, we bound the two terms $(1-\zeta)^T \Psi^0$ and $\frac{\gamma_{\min}}{\zeta n} \sum_{i=1}^n \gamma_i  C_i$ in \eqref{eqr1} by $\epsilon/2$.

We set $p=\sqrt{\min_{i\in[n]} \gamma_i \mu_i}$, so that $\zeta = \min_{i\in[n]} \gamma_i \mu_i$. We have
\begin{equation}
T\geq \frac{1}{\zeta}\log(2\Psi^0\epsilon^{-1}) \Rightarrow (1-\zeta)^T\Psi^0 \leq \frac{\epsilon}{2}.\label{eqgrergg}
\end{equation}
Moreover, 
\begin{equation*}
(\forall i\in[n] \mbox{ s.t.\ } C_i>0)\ \gamma_i \leq \frac{\epsilon \mu_{\min}}{2 C_i} \Rightarrow \frac{\gamma_{\min}}{\zeta n} \sum_{i=1}^n \gamma_i  C_i \leq \frac{\epsilon}{2} \frac{\left(\min_{j\in[n]} \gamma_j \right)\left( \min_{j\in[n]} \mu_j\right)}{\min_{j\in[n]} \gamma_j \mu_j }\leq \frac{\epsilon}{2}.
\end{equation*}
Therefore, we set for every $i\in[n]$
\begin{equation*}
\gamma_i\eqdef \min \left(\frac{1}{A_i},\frac{\epsilon \mu_{\min}}{2 C_i} \right)
\end{equation*}
(or $\gamma_i\eqdef \frac{1}{A_i}$ if $C_i=0$), 
and we get from \eqref{eqgrergg} that $\Exp{\Psi^T}\leq \epsilon$ after
\begin{equation*}
\mathcal{O}\left(\left(\max_{i\in[n]}  \max\left(\frac{A_i}{\mu_i},\frac{C_i}{\epsilon \mu_{\min}\mu_i}\right)\right)\log(\Psi^0\epsilon^{-1})\right)
\end{equation*}
iterations.

\section{Additional Experimental Results}

\begin{figure}[!htbp]
	\centering
	\begin{subfigure}[b]{0.48\textwidth}
		\centering
		\includegraphics[trim=0 0 0 0, clip, width=\textwidth]{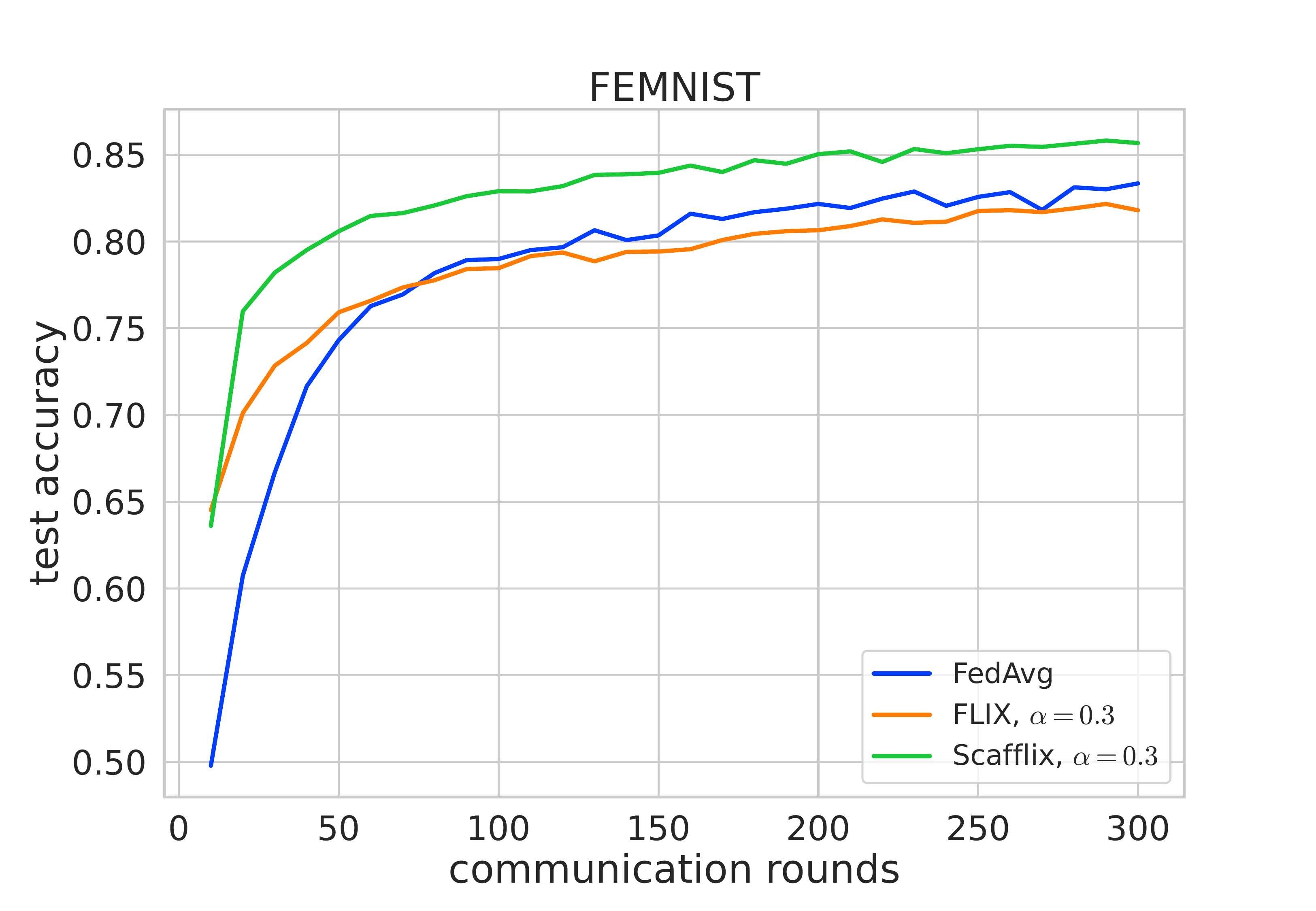}
		% \caption{}
	\end{subfigure}
	\hfill 
	\begin{subfigure}[b]{0.48\textwidth}
		\centering
		\includegraphics[trim=0 0 0 0, clip, width=\textwidth]{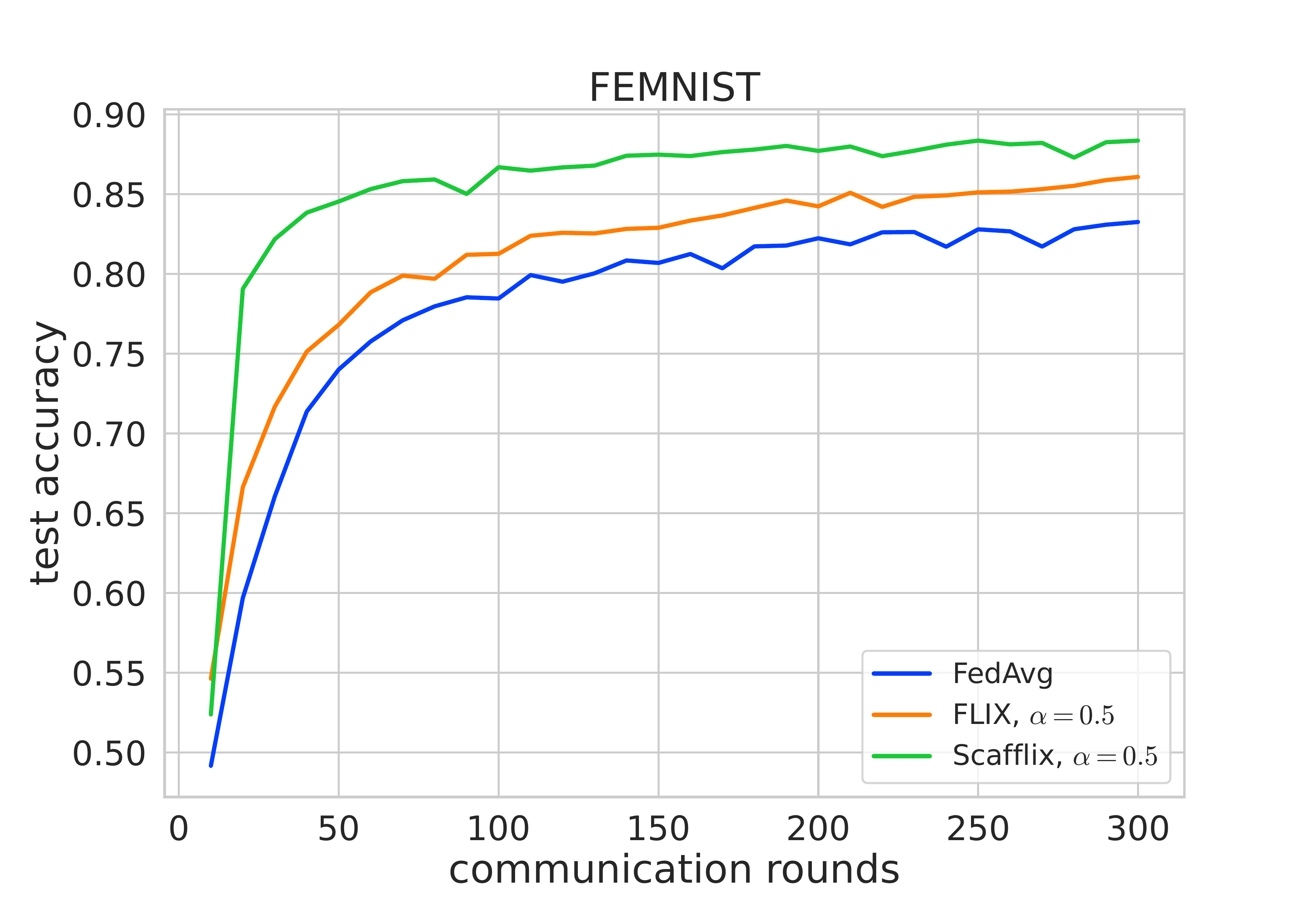}
		% \caption{$\tau=32$}
		\end{subfigure}
	\begin{subfigure}[b]{0.48\textwidth}
		\centering
		\includegraphics[trim=0 0 0 0, clip, width=\textwidth]{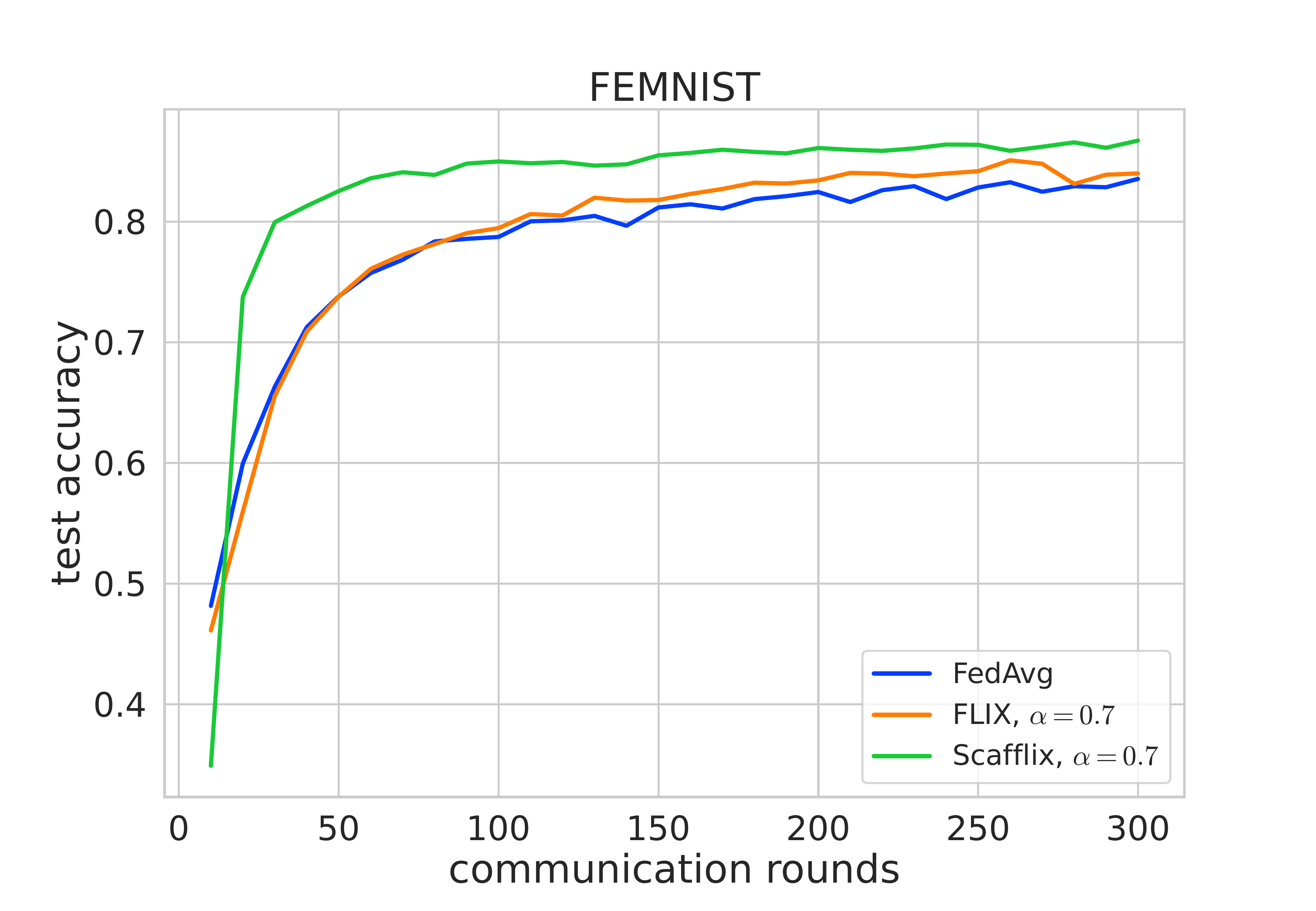}
		% \caption{}
	\end{subfigure}
	\hfill 
	\begin{subfigure}[b]{0.48\textwidth}
		\centering
		\includegraphics[trim=0 0 0 0, clip, width=\textwidth]{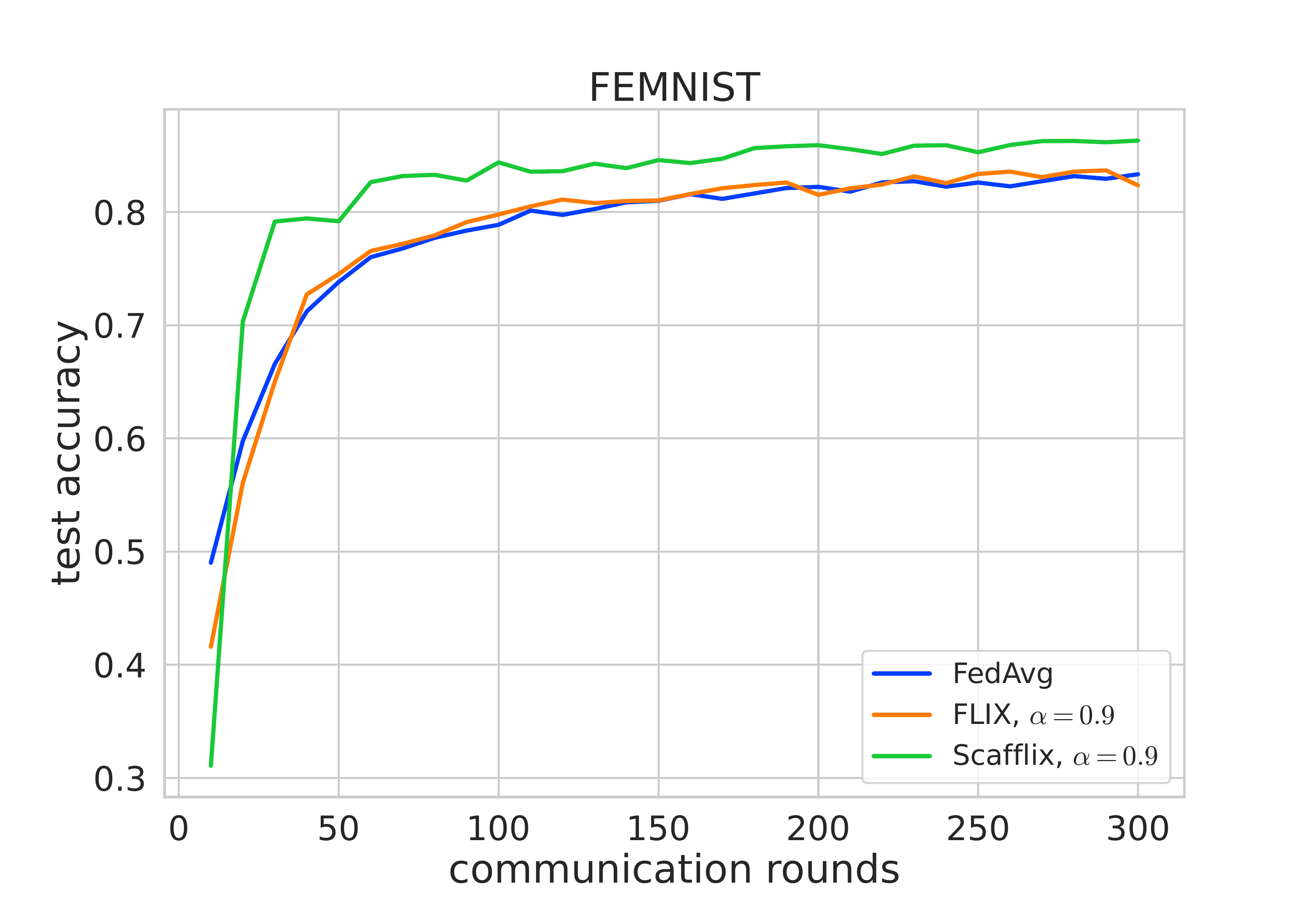}
		% \caption{$\tau=32$}
		\end{subfigure}
	\caption{As part of our experimentation on the FEMNIST dataset, we performed complementary ablations by incorporating various personalization factors, represented as $\alpha$. In the main section, we present the results obtained specifically with $\alpha = 0.1$. Furthermore, we extend our analysis by highlighting the outcomes achieved with $\alpha$ values spanning from 0.3 to 0.9, inclusively.}
	\label{fig:abs03}
\end{figure}

\begin{figure}[!htbp]
	\centering
	\begin{subfigure}[b]{0.48\textwidth}
		\centering
		\includegraphics[trim=0 0 0 0, clip, width=\textwidth]{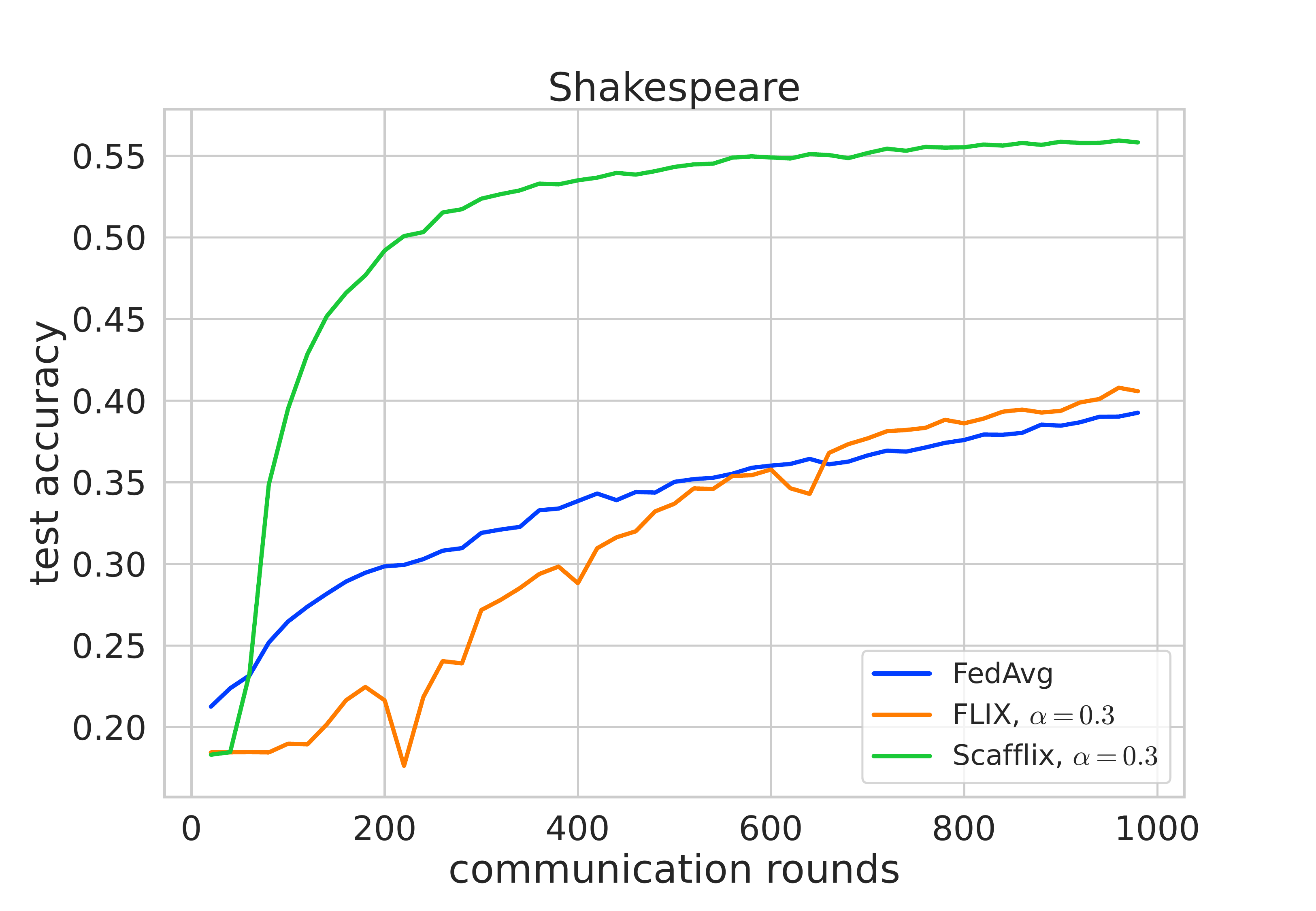}
		% \caption{}
	\end{subfigure}
	\hfill 
	\begin{subfigure}[b]{0.48\textwidth}
		\centering
		\includegraphics[trim=0 0 0 0, clip, width=\textwidth]{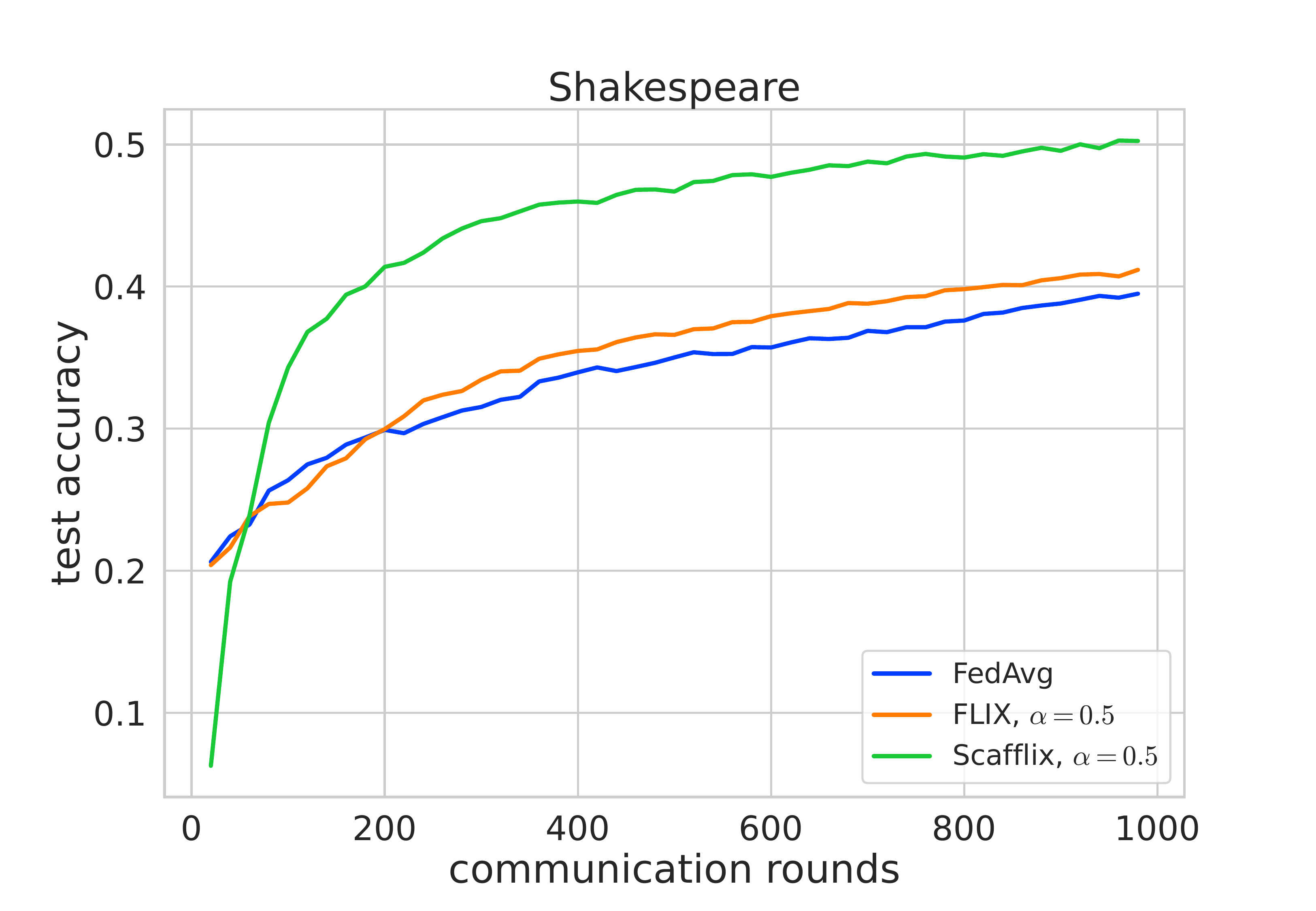}
		% \caption{$\tau=32$}
		\end{subfigure}
	\begin{subfigure}[b]{0.48\textwidth}
		\centering
		\includegraphics[trim=0 0 0 0, clip, width=\textwidth]{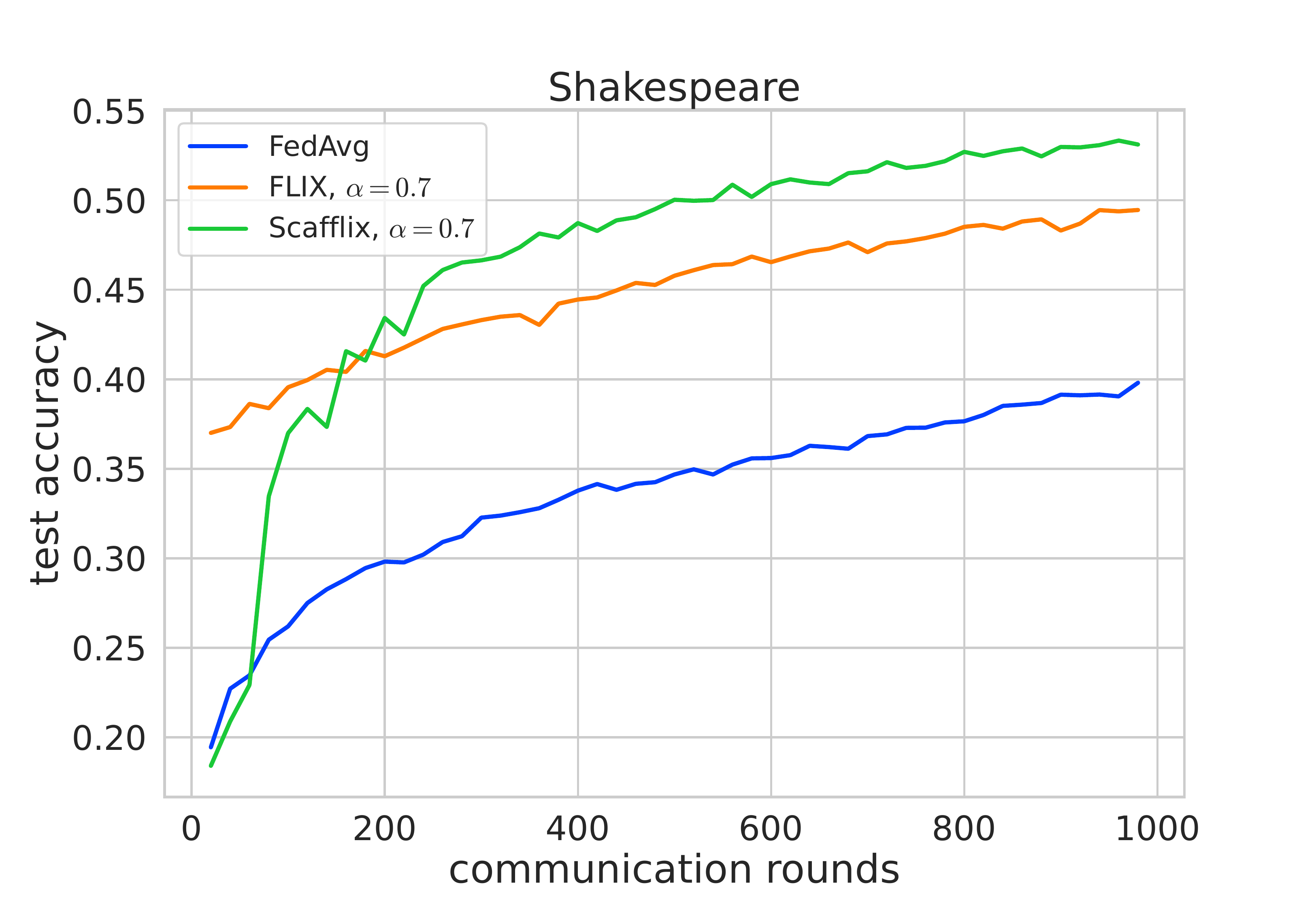}
		% \caption{}
	\end{subfigure}
	\hfill 
	\begin{subfigure}[b]{0.48\textwidth}
		\centering
		\includegraphics[trim=0 0 0 0, clip, width=\textwidth]{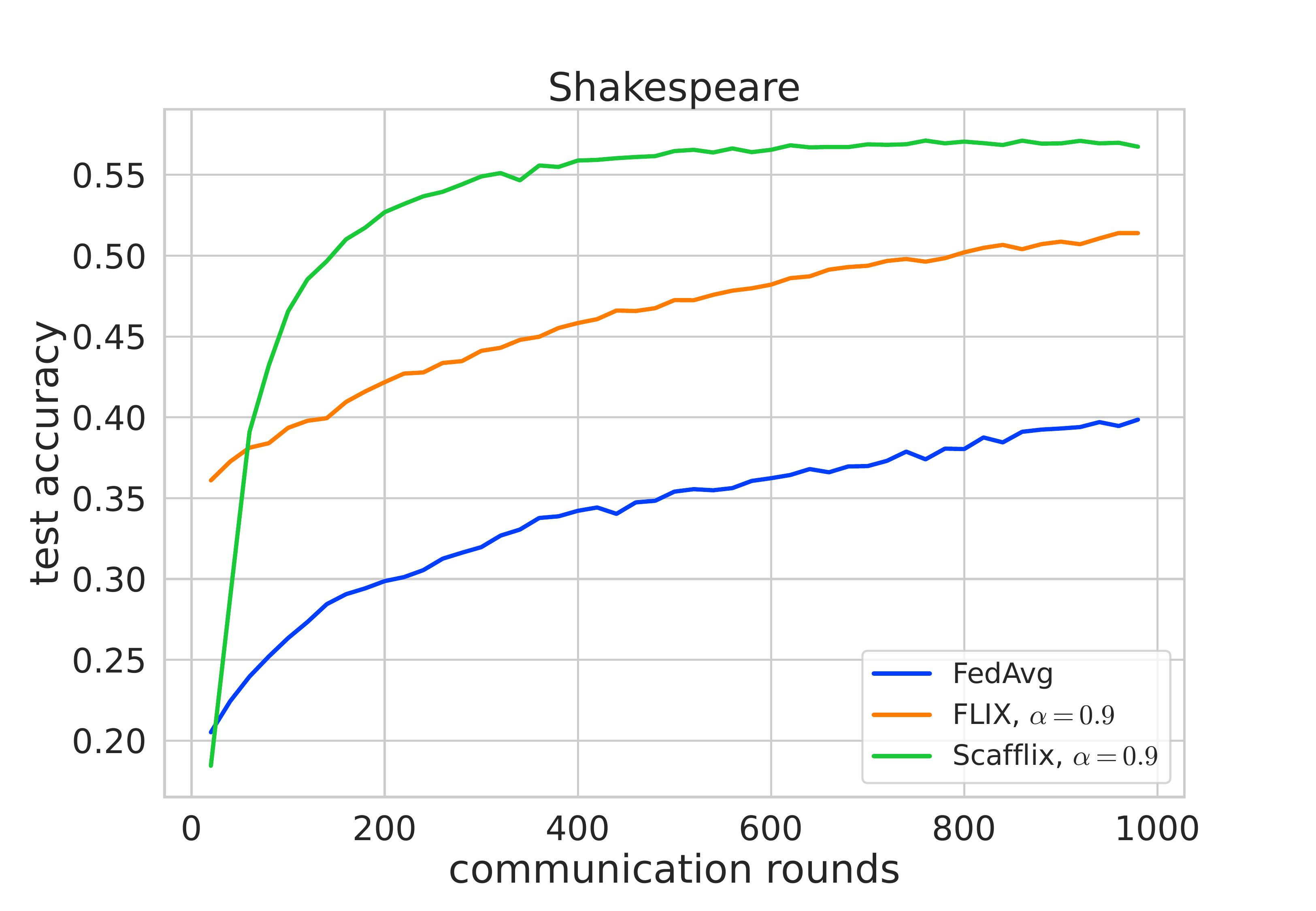}
		% \caption{$\tau=32$}
		\end{subfigure}
	\caption{In our investigation of the Shakespeare dataset, we carried out complementary ablations, considering a range of personalization factors denoted as $\alpha$. The selection strategy for determining the appropriate $\alpha$ values remains consistent with the methodology described in the above figure.} 
	\label{fig:abs09}
\end{figure} 

\begin{figure}[!htbp]
	\centering
	\begin{subfigure}[b]{0.48\textwidth}
		\centering
		\includegraphics[trim=0 0 0 0, clip, width=\textwidth]{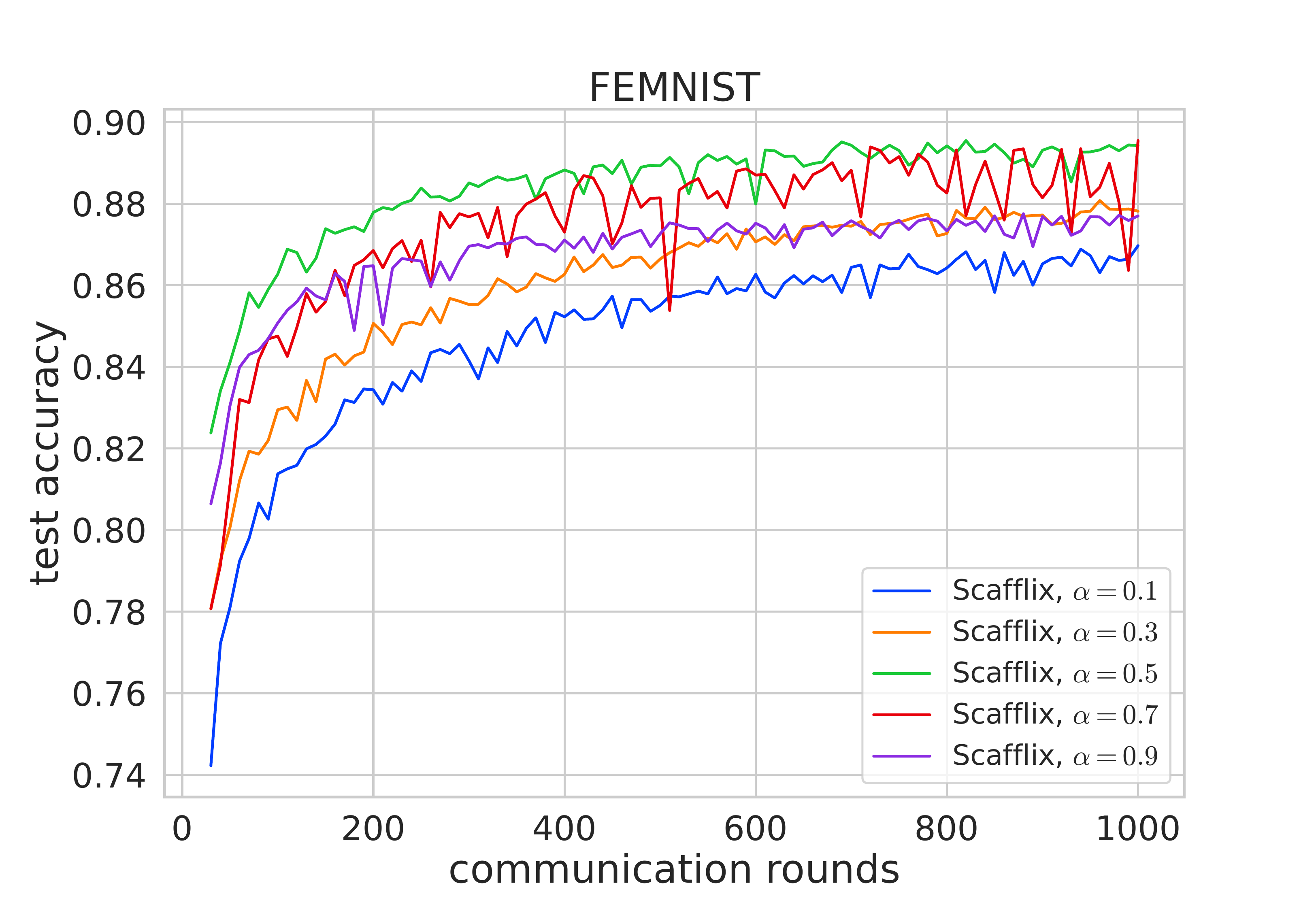}
		% \caption{}
	\end{subfigure}
	\hfill 
	\begin{subfigure}[b]{0.48\textwidth}
		\centering
		\includegraphics[trim=0 0 0 0, clip, width=\textwidth]{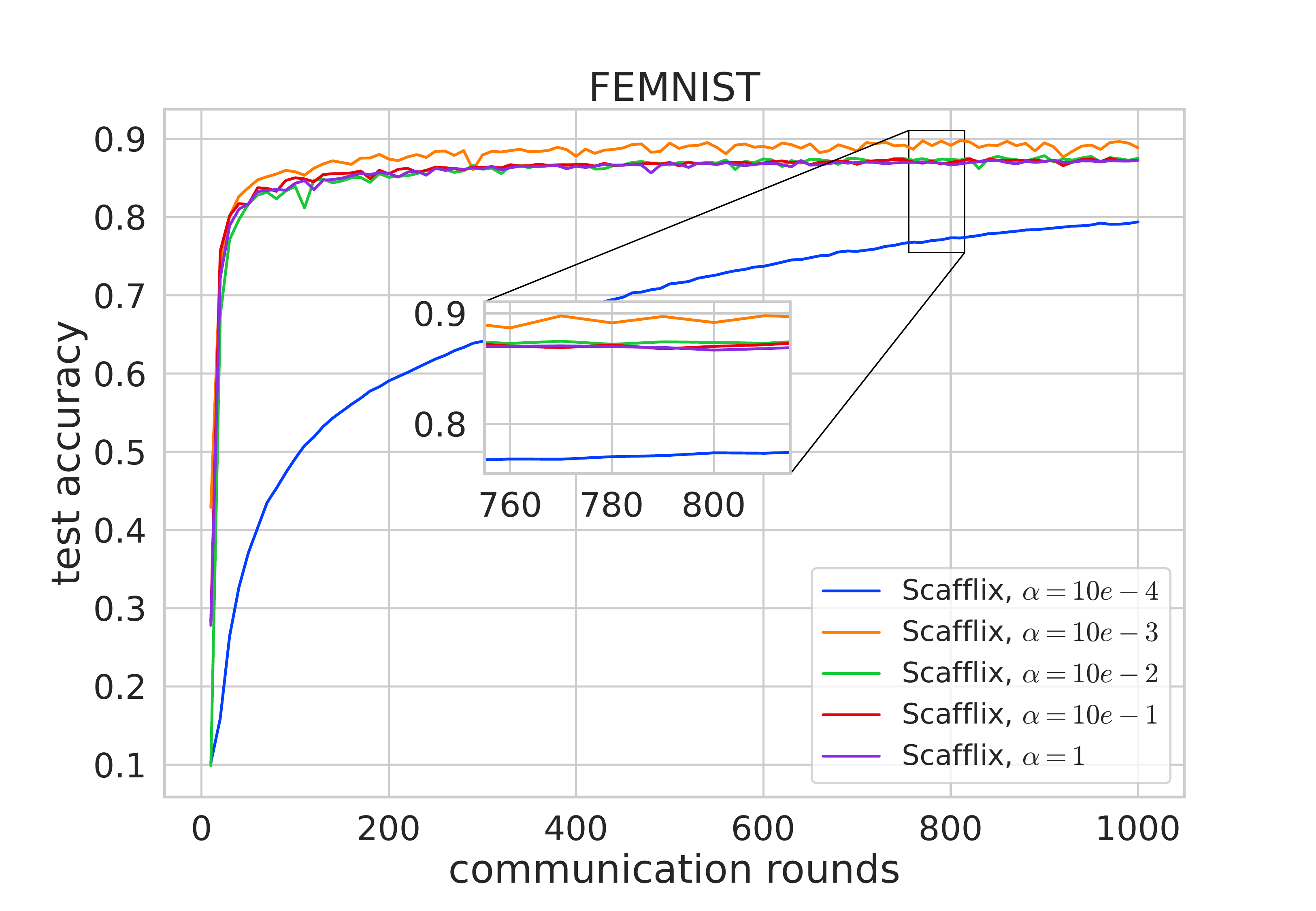}
		% \caption{$\tau=32$}
		\end{subfigure}
	\caption{Ablation studies with different values of the personalization factor $\alpha$. The left figure is the complementary experiment of linearly increasing $\alpha$ with full batch size; the right is the figure with exponentially increasing $\alpha$ with default batch size 20. } 
	\label{fig:abs08}
\end{figure} 

\end{document}